\newtheorem{proposition}{Proposition}
\def\eqref#1{equation~\ref{#1}}
\def\1{\bm{1}}
\DeclareMathAlphabet{\mathsfit}{\encodingdefault}{\sfdefault}{m}{sl}
\SetMathAlphabet{\mathsfit}{bold}{\encodingdefault}{\sfdefault}{bx}{n}
\def\gD{{\mathcal{D}}}
\def\gE{{\mathcal{E}}}
\def\gL{{\mathcal{L}}}
\def\gX{{\mathcal{X}}}
\newcommand{\E}{\mathbb{E}}
\newcommand{\R}{\mathbb{R}}
\newcommand{\KL}{D_{\mathrm{KL}}}
\def\eqref#1{(\ref{#1})}
\definecolor{Red}{rgb}{0.768, 0.054, 0.054}
\definecolor{Blue}{rgb}{0.152, 0.294, 0.925}
\definecolor{Green}{rgb}{0,0.4,0.7}
\newcommand{\DAI}{$\Delta$-AI\xspace}
\newcommand{\ie}{\textit{i.e.}}
\newcommand{\eg}{\textit{e.g.}}
\DeclareMathOperator{\Pa}{Pa}
\DeclareMathOperator{\Ch}{Ch}
\newcommand{\zerodisplayskips}{%
  \setlength{\abovedisplayskip}{1mm}%
  \setlength{\belowdisplayskip}{1mm}%
  \setlength{\abovedisplayshortskip}{1mm}%
  \setlength{\belowdisplayshortskip}{1mm}}
\appto{\normalsize}{\zerodisplayskips}
\appto{\small}{\zerodisplayskips}
\appto{\footnotesize}{\zerodisplayskips}
\title{Delta-AI: Local objectives for amortized\\inference in sparse graphical models}
\author{Jean-Pierre Falet\textsuperscript{*}, Hae Beom Lee\textsuperscript{*}, Nikolay Malkin\textsuperscript{*}, Chen Sun, Dragos Secrieru,\\
\bf Thomas Jiralerspong, Dinghuai Zhang, Guillaume Lajoie\textsuperscript{$\dagger$}, Yoshua Bengio\textsuperscript{$\diamond$}\vphantom{\thanks{Equal contribution. \textsuperscript{$\dagger$}CIFAR AI Chair. \textsuperscript{$\diamond$}CIFAR Senior Fellow.}}\\
    Mila -- Qu\'ebec AI Institute, Universit\'e de Montr\'eal\\
    Montreal, Quebec, Canada\\
\small \texttt{\{jean-pierre.falet, hae-beom.lee, nikolay.malkin\}@mila.quebec}
}
\def\section{\@startsection {section}{1}{\z@}{-0.3ex}{0.3ex}{\large\sc\raggedright}}
\def\subsection{\@startsection{subsection}{2}{\z@}{-0.2ex}{0.2ex}{\normalsize\sc\raggedright}}
\def\subsubsection{\@startsection{subsubsection}{3}{\z@}{-0.1ex}{0.1ex}{\normalsize\sc\raggedright}}
\def\paragraph{\@startsection{paragraph}{4}{\z@}{0ex}{-1em}{\normalsize\bf}}
\def\subparagraph{\@startsection{subparagraph}{5}{\z@}{0ex}{-1em}{\normalsize\sc}}
\newcommand{\edit}[1]{#1}
\begin{document}

\maketitle

\begin{abstract}
We present a new algorithm for amortized inference in sparse probabilistic graphical models (PGMs), which we call $\Delta$-amortized inference (\DAI). Our approach is based on the observation that when the sampling of variables in a PGM is seen as a sequence of actions taken by an agent, sparsity of the PGM enables local credit assignment in the agent's policy learning objective. This yields a local constraint that can be turned into a local loss in the style of generative flow networks (GFlowNets) that enables off-policy training but avoids the need to instantiate all the random variables for each parameter update, thus speeding up training considerably. The \DAI objective matches the conditional distribution of a variable given its Markov blanket in a tractable learned sampler, which has the structure of a Bayesian network, with the same conditional distribution under the target PGM. As such, the trained sampler recovers marginals and conditional distributions of interest and enables inference of partial subsets of variables. We illustrate \DAI's effectiveness for sampling from synthetic PGMs and training latent variable models with sparse factor structure.\\Code: \href{https://github.com/GFNOrg/delta-ai}{\tt https://github.com/GFNOrg/Delta-AI}.
\end{abstract}
\section{Introduction}

Probabilistic modeling in high-dimensional spaces is challenging due to the combinatorially large number of possible modes. When the modes are not known \textit{a priori} and are well-separated from each other, the convergence of local exploration algorithms such as Markov chain Monte Carlo (MCMC) is prohibitively slow. In contrast, amortized inference methods, which train models to perform approximate sampling from the distribution of interest, are potentially scalable to high-dimensional spaces and come with guarantees on generation time, but may also suffer from mode collapse issues. Generative flow networks \citep[GFlowNets;][]{bengio2021flow} are a family of methods that have recently shown success in sampling distributions over high-dimensional discrete spaces. GFlowNets are  amortized sequential samplers that are compatible with local exploration \citep{zhang2022generative} and allow stable off-policy training \citep{malkin2022gfnhvi}, helping to overcome the obstacle of mode collapse.

A major limitation of GFlowNets is that they scale poorly in the length of the generative trajectory, which equals number of variables if they are sampled one at a time as in \citet{zhang2022generative}. This is because the (not necessarily normalized) target density -- the GFlowNet's reward function -- takes as input the instantiated values of \emph{all} the variables. As a result, the sampling cost scales linearly in the dimension. In addition, because the learning signal for all steps in a sampling trajectory comes from the terminal reward, GFlowNets suffer from inefficient credit assignment when trajectories are long, unless the reward function can be meaningfully decomposed into partial rewards for partially instantiated variables \citep{pan2023better}.

Motivated by the GFlowNet methodology and its limitations, we propose $\Delta$-amortized inference (\DAI), an algorithm for energy-based probabilistic inference and training that scales well with respect to the dimension of variables. 
\DAI training recovers the conditional probability distributions in a Bayesian network that defines the same distribution as a target graphical model.
\DAI makes use of known graphical model structure to allow for local credit assignment: updating the parameters of the amortized sampler requires only instantiating a single variable and its Markov blanket, in contrast to all variables.  It can also be seen as a novel training algorithm for GFlowNets of a particular structure, where the GFlowNet policy performs amortized inference.

We summarize the advantages of \DAI as follows. First, \DAI enables significantly faster training than regular GFlowNets and other amortized inference methods because \emph{i)} each training step can involve only a small subset of variables, so the sampling cost is negligible, \emph{ii)} the local training signal is much stronger than that in regular GFlowNet methods, as it uses the decomposition of the energy function into terms, each of which is directly matched with the corresponding conditionals and marginals. Second, the memory cost is very low because computing each gradient update only involves a small subset of variables (for a sparse graphical model). Lastly, with \DAI we can benefit from flexible probabilistic inference by amortizing many possible sampling orders into a single sampler (by learning Bayesian networks with multiple Markov-equivalent structures), allowing inference over partial subsets of variables.

The paper is structured as follows:
\begin{itemize}[left=0pt,nosep]
\item In \S\ref{sec:background}, we introduce the necessary background on graphical models and present the GFlowNet approach to amortized inference as motivation. 
\item In \S\ref{sec:approach}, we define and analyze the proposed \DAI algorithm.
\item In \S\ref{sec:experiments_synth}, we validate our idea on various synthetic energy-based models, showing that \DAI provides faster wall-clock convergence time compared to algorithms from prior work, with \DAI's convergence time even comparable to or better than that of (training-free) MCMC sampling, noting that the trade-off gets better if more samples are needed, as with amortized inference in general.
\item In \S\ref{sec:experiments_real}, we validate \DAI on the task of image generation using a latent variable model. We impose the inductive bias of a graphical model structure on the joint over observed and latent variables and use \DAI as the posterior sampler in an amortized variational expectation-maximization (EM) procedure.
\end{itemize}

\section{Background}
\label{sec:background}

\subsection{Probabilistic graphical models}
\label{subsec:graphical_models}

In this section, we introduce the relevant notation and review the background on probabilistic graphical models (PGMs) that is essential to understand our main methodology. Details and further background can be found in \S\ref{sec:extended_pgm} and in \citet{koller2009probabilistic}. Additional related work is in \S\ref{sec:rw}.

We consider a collection of random variables $X=(X_v)_{v\in V}$ indexed by a set $V$. For concreteness, we will assume the $X_v$ are binary, so $X$ takes values $x\in\gX=\{0,1\}^{|V|}$. We also assume that $X$ has full support, \ie, its density $p:\gX\to\R$ is strictly positive. However, the entirety of the following discussion applies to any discrete spaces, and much of it generalizes to continuous random variables.

\paragraph{Undirected graphical models (Markov networks / factor graphs).}

Let $G=(V,E)$ be an undirected graph, with each vertex $v\in V$ corresponding to a variable $X_v$. $X$ is a \emph{Markov network} with respect to $G$ if it satisfies the local Markov property:
\[X_v \perp\!\!\!\perp X_{V\setminus(\{v\}\cup {\cal N}_G(v))}\mid X_{{\cal N}_G(v)}\quad\forall v\in V,\]
where ${\cal N}_G(v)$ denotes the set of neighbours of $v$ in $G$ and, for $S\subseteq V$, $X_S$ denotes the projection of $X$ onto the variables indexed by $S$, $X_S=(X_v)_{v\in S}$. Markov networks are a natural way to encode known conditional independences in the joint distribution of the variables $X_v$.

A common way to specify of Markov networks is via a decomposition of the joint density into local \emph{factors}, each depending on a subset of the variables. If $\phi_1,\dots,\phi_K$ is a collection of functions taking positive real values, where $\phi_k$ depends on $X_{S_k}$ for some $S_k\subseteq V$ (\ie, $\phi_k:\{0,1\}^{|S_k|}\to\R_{>0}$), then the normalized product of these factors defines a density:
\begin{equation}
    p(x) =\frac1Z \prod_{k=1}^K \phi_k(x_{S_k}),\quad x_S:=(x_v)_{v\in S}.
    \label{eq:basic}
\end{equation}
This is an energy model with energy $\gE(x)=-\sum_k\log\phi_k(x_{S_k})$. Note that the normalization constant $Z$ equals the sum of the product of factors over all $x\in\gX$ and can be intractable. 
Distributions of the form (\ref{eq:basic}) are equivalent to Markov networks with respect to a certain graph (see \S\ref{sec:extended_pgm} for review).

\paragraph{Directed graphical models (Bayesian networks).}

Directed graphical models are a different way of encoding conditional independences among random variables. If $D=(V,A)$ be a directed acyclic graph (DAG), where $A\subset V\times V$ is the set of directed edges, then $X$ is a \emph{Bayesian network} with graph structure $D$ if its density $p$ is the product of the conditionals of each variable given its parents:
\begin{equation}
    p(x)=\prod_{v\in V}p(x_v\mid x_{\Pa(v)}),\quad \Pa(v):=\{u:(u,v)\in A\},
    \label{eq:bayesnet}
\end{equation}
with the conditionals understood to be unconditional marginals if the parent set is empty. Note that each factor in (\ref{eq:bayesnet}) is normalized, unlike in (\ref{eq:basic}). This property makes the joint distribution easy to sample: one chooses any topological ordering $v_1,v_2,\dots$ of $V$ consistent with the DAG $D$ and samples the $x_{v_i}$ in order, conditioning each $x_{v_i}$ on the (previously sampled) parents of $v_i$.
In a Bayesian network, every variable is conditionally independent of its non-descendants given its parents. 

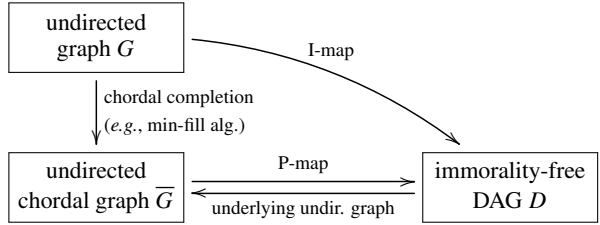
\begin{wrapfigure}{r}{0.57\textwidth}
\vspace{-1em}
\vspace{-0.15in}
\[
        \xymatrix{
        \text{\fbox{\begin{minipage}[c][0.7cm][c]{2.1cm}\centering\small undirected\vphantom{y}\\graph $G$\end{minipage}}}
        \ar@/^{4ex}/[drrr]^{\text{I-map}}
        \ar[d]^{\text{\begin{minipage}{3cm}chordal completion\\(\eg, min-fill alg.)\end{minipage}}}
        \\
        \text{\fbox{\begin{minipage}[c][0.7cm][c]{2.1cm}\centering\small undirected\vphantom{y}\\ chordal\vphantom{y} graph $\overline G$\end{minipage}}}
        \ar@<0.5ex>[rrr]^{\text{P-map}}
        &&&\text{\fbox{\begin{minipage}[c][0.7cm][c]{2.1cm}\centering\small immorality-free\\DAG $D\vphantom{\overline{G}g}$\end{minipage}}}\ar@<0.5ex>[lll]^{\text{underlying undir. graph}}
    }
    \]
\vspace{-1em}
    \caption{\small Summary of the relationships between undirected graphs defining Markov networks (left) and DAGs defining Bayesian networks (right). Chordalization strictly relaxes the conditional independence constraints on a Markov network, while Markov networks with respect to a chordal graph and Bayesian networks with respect to its P-map are equivalent.
    }
    \label{fig:pgm_summary}
    \vspace{-0.2in}
\end{wrapfigure}

\paragraph{From Bayesian networks to Markov networks and back.} Any Bayesian network with respect to a DAG $D$ is also a Markov network, as every conditional in (\ref{eq:bayesnet}) can be considered a factor in a model of the form (\ref{eq:basic}). However, the graph structure of the corresponding Markov network (called the \emph{moral graph}) may have more edges than the underlying undirected graph of $D$, unless $D$ has no \emph{immoralities} (induced subgraphs of the form $\bullet\rightarrow\bullet\leftarrow\bullet$).

Conversely, if $X$ is a Markov network with respect to a graph $G$, then $X$ is a Bayesian network with respect to a directed graph with underlying undirected graph $G$ if $G$ is \emph\emph{chordal} (\ie, has no cycle of length greater than 3 as an induced subgraph). Any graph $G$ can be made into a chordal graph $\overline G$ by adding edges (a process called \emph{triangulation} or \emph{chordalization}), which relaxes the conditional independence conditions on $X$. In turn, $\overline G$ has an immorality-free orientation $D$, and $X$ is a Bayesian network with respect to $D$. $D$ is called an I-map for $G$ and a P-map for $\overline{G}$.

These relationships are summarized in Fig.~\ref{fig:pgm_summary} and more details are in \S\ref{sec:extended_pgm}.

\subsection{\edit{Training and amortization in PGMs}}
\label{subsec:training_amortization_pgm}

\paragraph{Maximum-likelihood training of PGMs.}

The problem of generative modeling consists in optimizing the parameters of a PGM so as to maximize the likelihood of a given dataset of independent observations of the variables, $\gD=\{x^{(i)}\}, x^{(i)}\in\gX$. 

For Bayesian networks, where the conditional distributions $p_\psi\left(x_v\mid x_{\Pa(v)}\right)$ depend on a parameter vector $\psi$, the gradient of the log-likelihood is simply
\begin{equation}
    \nabla_\psi\log p_\psi(\gD)=\sum_i\nabla_\psi\log p_\psi\left(x^{(i)}\right)=\sum_i\sum_{v\in V}\nabla_\psi\log p_\psi\left(x^{(i)}_v\mid x^{(i)}_{\Pa(v)}\right).
    \label{eq:bn_gradient}
\end{equation}
Generative modeling with energy-based models of the form (\ref{eq:basic}), where the factors $\phi_k$ depend on a parameter vector $\psi$, is more difficult. The gradient of the log-likelihood of a sample is:
\begin{equation}
    \nabla_\psi\log p_\psi(x)=\sum_k\nabla_\psi\log\phi_k\left(x_{S_k}\right)-\nabla_\psi\log Z,\quad \nabla_\psi\log Z=\E_{x\sim p_\psi(x)}[\nabla_\psi\log p_\psi(x)].
\label{eq:ebm_gradient}
\end{equation}
Optimizing $\log p_\psi(\gD)$ thus consists of minimizing the energy energy for given samples $x^{(i)}\in\gD$ (\emph{positive phase} -- first term in (\ref{eq:ebm_gradient})) and maximizing it for samples from the model itself, $x\sim p_\psi(x)$ (\emph{negative phase} -- second term). The latter is complicated by the intractability of sampling the model exactly, motivating methods that draw approximate samples using MCMC \citep{Hinton2002TrainingPO,Tieleman2008TrainingRB,Du2021ImprovedCD} or an amortized inference method, such as a GFlowNet \citep{zhang2022generative}.

One can also consider the case of generative modeling with \emph{latent variables}, \ie, where some subset $H\subset V$ of the variables is unobserved. The log-likelihood gradient for a partial observation $x_{V\setminus H}$ is
\begin{equation}
    \nabla_\psi\log p_\psi\left(x_{V\setminus H}\right)=\E_{x_H\sim p_\psi(x_H\mid x_{V\setminus H})}[\nabla_\psi\log p_\psi(x)].
    \label{eq:lvm_gradient}
\end{equation}
Thus the objectives (\ref{eq:bn_gradient}) and (\ref{eq:ebm_gradient}) can be applied to partial observations, as long as one can sample the conditional distribution over the latents $x_H$ given the observed values $x_{V\setminus H}$. This can be achieved using MCMC, importance sampling \citep{bornschein2015rws,Le2019RevisitingRW}, or amortized estimation of this conditional distribution, known as variational expectation-maximization \citep[EM;][]{dempster1977em,neal1998view,koller2009probabilistic}. A GFlowNet can also be used as the amortized estimator in variational EM \citep{hu2023gfnem}.

\paragraph{Amortized inference.}

Complementary to the generative modeling problem in PGMs is that of amortized inference: given a factorized density $p$ of the form (\ref{eq:basic}) with unknown normalization constant, one wishes to approximate $p$ by a distribution that can be tractably sampled. One option is to approximate $p$ by a sequential sampler, which iteratively chooses the value of one variable at a time conditioned on the values of those previously sampled. Training this sampler, as an agent that proposes action sequences and receives the unnormalized density as a reward, can be seen as a problem of policy optimization. 

Given a density $p$ with factors $\phi_k$ and corresponding Markov network structure $G=(V,E)$, suppose that $D=(V,A)$ is an I-map for $G$, so $p$ is also a Bayesian network with respect to $D$. If $q_\theta$ is any Bayesian network with respect to $D$, with its conditional distributions given by parametric estimators $q_\theta(x_v\mid x_{\Pa(v)})$, \edit{we are interested in fitting the conditionals of $q_\theta$ such that the distributions $q_\theta$ and $p$ are equal. If this is achieved, the sampler $q_\theta$ can be used to perform ancestral sampling} of the variables in a fixed topological ordering $v_1,v_2,\dots,v_{|V|}$.

\subsection{\edit{Amortized inference with generative flow networks}}
\label{subsec:gflownet}

\edit{We give some background on generative flow networks (GFlowNets), a group of deep reinforcement learning (RL) methods whose limitations in non-local credit assignment are a motivation for our proposed algorithms. Taking an RL view, the training of the conditionals in $q_\theta$ to sample from $p$ can be seen as a policy optimization problem. Ancestral sampling from $q_\theta$ is viewed as a sequence of actions taken by a sampling agent, which obtains a value of all the variables $x$ and receives a reward of $R(x)=\prod_k\phi_k(x_{S_k})$.} The aim of training the policy is to make the likelihood of producing the sample $x$ proportional to the reward: $q_\theta(x)\propto R(x)$.

GFlowNets, which are a specialized family of reinforcement learning algorithms that train agents to match a target distribution, have yielded the state of the art in discrete probabilistic modeling problems \citep{zhang2022generative}. However, in \citet{zhang2022generative}, the target density is unstructured; the sampler does not take advantage of a known factor structure and learns to sample the variables in an arbitrary order. Here we summarize, in simplified form, the GFlowNet objectives that can be used to train a Bayesian network as a sequential sampler that approximates the target distribution.
  
We describe TB and DB, two losses that achieve this. (We also use SubTB, a variance-reducing interpolation between TB and DB, in some experiments; see \S\ref{sec:extended_gfn}.)

\paragraph{Trajectory balance.} To train $q_\theta$ as a GFlowNet with the trajectory balance (TB) objective \citep{malkin2022trajectory}, one optimizes its parameters jointly with an estimate $Z_\theta$ of the normalization constant. The TB objective at a sample $x$ is
\begin{equation}
    \gL_{\rm TB}(x) = \left(\log Z_\theta + \log q_\theta(x) - \log R(x)\right)^2.
    \label{eq:tb_simple}
\end{equation}
\looseness=-1
If (\ref{eq:tb_simple}) is optimized to 0, so $q_\theta(x)=\frac{1}{Z_\theta}R(x)$, for all samples $x$, then $q_\theta$ is a perfect sampler for $p$ and $Z_\theta$ equals $Z$, the true normalization constant of $p$. The objective can be optimized by gradient descent using \emph{on-policy} samples $x$ taken from $q_\theta$ itself, in which case it is equivalent in expected gradient to $\KL(q_\theta\mid\!\mid p)$ \citep{malkin2022gfnhvi,zimmermann2022variational}, or at \emph{off-policy} samples, such as those drawn from a tempered policy or from a known dataset, as done by \citet{zhang2022generative,zimmermann2022variational} in order to avoid mode collapse and thus better cover the target distribution.

\paragraph{Detailed balance.} The detailed balance (DB) objective \citep{bengio2021foundations} is an alternative loss that typically has lower variance but poorer performance than TB. The DB objective depends on an individual step in a sampling trajectory, rather than on the complete sample. Recall that $q_\theta$, as an agent, samples $x$ in a topological order: $x_{v_1},x_{v_2},\dots,x_{v_{|V|}}$. To train $q_\theta$ with the DB objective, one also trains an auxiliary object, a \emph{flow function} $F_\theta$ that outputs a scalar for any partially instantiated sample. The DB objective at the $i$-th step of the sampling trajectory is
\begin{equation}
    \gL_{\rm DB}\left(x_{\{v_1,\dots,v_{i-1}\}},x_{v_i}\right) = \left(\log F_\theta\left(x_{\{v_1,\dots,v_{i-1}\}}\right)+\log q_\theta\left(x_{v_i}\mid x_{\Pa(v_i)}\right)-\log F_\theta\left(x_{\{v_1,\dots,v_{i}\}}\right)\right)^2,
    \label{eq:db_simple}
\end{equation}
with the additional constraint that if $x$ is fully instantiated sample, then $F_\theta(x)=R(x)$. If the DB loss is optimized to 0 for every such transition, then $q_\theta$ is a perfect sampler for $p$, and the flow function at the initial state, $F(x_\emptyset)$, equals the true normalization constant $Z$. Just as with TB, one can flexibly choose the samples at which (\ref{eq:db_simple}) is optimized by gradient descent.

\citet{pan2023better} found that DB performs strongly if the reward has a multiplicative decomposition that enables a form of reward shaping called the \emph{forward-looking} parametrization of flows. In our case of interest, the reward $R(x)$ has a multiplicative decomposition into the factors $\phi_k(x_{S_k})$. Rather than making $F_\theta$ a neural network taking a partial sample as input, one can learn $F_\theta$ as a multiplicative correction to a partially accumulated reward $\tilde R(x_{\{v_1,\dots,v_i\}})$, \ie,
\begin{equation}
    F_\theta\left(x_{\{v_1,\dots,v_i\}}\right)
    =
    {\rm NN}_\theta\left(x_{\{v_1,\dots,v_i\}}\right)\cdot\tilde R\left(x_{\{v_1,\dots,v_i\}}\right),
    \label{eq:fl}
\end{equation}
where ${\rm NN}_\theta$ is a neural network. We discuss two ways to define the partially reward $\tilde R$ in \S\ref{sec:extended_gfn}.

However, both the TB and DB losses are \emph{non-local}: they require a fully instantiated sample $x$ to receive a training signal.
\section{\edit{\hbox{Local constraints for matching Markov and Bayesian networks}}}
\label{sec:approach}

In this section, we introduce \DAI, which aims to address the limitations of GFlowNets in locality of credit assignment. We develop a novel GFlowNet-style objective, but whose loss only depends on a small subset of variables at a time. This is hypothesized to greatly improve training convergence and decrease resource requirements. We derive \DAI by enforcing the equality of local conditional distributions in a pair of graphical models on the same set of variables. 

\paragraph{Setting.} We use the notation introduced in \S\ref{subsec:graphical_models}. Let $X=(X_v)_{v\in V}$ be a collection of discrete random variables whose density $p$ has a factor structure of the form (\ref{eq:basic}), with factors $\phi_k$ depending on sets of variables $S_k$. Let $G=(V,E)$ be the graph with edges between any two variables that cooccur in a factor, so that $X$ is a Markov network with respect to $G$.

Let $\overline G$ be a chordal completion of $G$ and $D=(V,A)$ an immorality-free orientation of $\overline G$. By the results in \S\ref{subsec:graphical_models}, $X$ is a Bayesian network with respect to $D$, so $p$ has a factorization over the conditionals specified by $D$.

\paragraph{\DAI constraint.} The \DAI constraint expresses the equality of the conditional distributions over one of the variables given its Markov blanket under the two factorizations, $p$ determined by the Markov network and $q$ by the Bayesian network. To be precise, suppose that $x,x'\in\gX$ are two settings of the variables that differ in exactly one variable $u$, \ie, $x_u\neq x'_u$ and $x_{V\setminus\{u\}}=x'_{V\setminus\{u\}}$. Using the factorization for a Markov network (\ref{eq:basic}) and that of a Bayesian network (\ref{eq:bayesnet}), and combining them appropriately, we have:
\begin{equation}
    \prod_{k:u\in S_k}\frac{\phi_k\left(x_{S_k}\right)}{\phi_k\left(x_{S_k}'\right)}
    =
    \prod_{v\in \{u\}\cup\Ch(u)}\frac{q\left(x_v\mid x_{\Pa(v)}\right)}{q\left(x_v'\mid x_{\Pa(v)}'\right)},\quad\quad\Ch(u):=\{v:(u,v)\in A\}.
    \label{eq:delta_constraint}
\end{equation}
We remark that the left side depends only on $x_{\{u\}\cup{\cal N}_G(u)}$, while the right side depends only on $x_{\{u\}\cup{\cal N}_{\overline G}(u)}$. Thus these constraints are \emph{local} with respect to the PGM structure. They are nonetheless sufficient to recover the conditional distributions in the Bayesian network. The correctness and sufficiency of this constraint are formalized by the following proposition.
    
\begin{propositionE}[][end,restate]
    Suppose that $p:\gX\to\R_{>0}$ is the density of a Markov network with factors $\phi_k$ and that $q:\gX\to\R_{>0}$ is the density of a Bayesian network with respect to an I-map $D=(V,A)$. Then the following are equivalent: (1) for all $x,x'$ differing in a single variable $x_u$, (\ref{eq:delta_constraint}) holds; (2) $p=q$.
    \label{prop:delta_main} 
\end{propositionE}
\begin{proofE}

\textbf{Part 1: (1) $\Longleftarrow$ (2).}
Suppose that $q=p$, \ie, the conditional distributions $q$ specified by the Bayesian network are the conditionals of the distribution $p$.

As stated before, the \DAI constraint stems from an equivalence of the Markov and Bayesian network factorizations. 

Suppose that $x,x'\in\gX$ are two settings of the variables that differ in exactly one variable $u$, \ie, $x_u\neq x'_u$ and $x_{v}=x_{v}$ for all $v\in V\setminus\{u\}$. Using the Markov network factorization (\ref{eq:basic}), we have
\begin{equation}
    \frac{p(x)}{p(x')}
    =\frac{\frac1Z\prod_k\phi_k\left(x_{S_k}\right)}{\frac1Z\prod_k\phi_k\left(x_{S_k}'\right)}
    =\prod_k\frac{\phi_k\left(x_{S_k}\right)}{\phi_k\left(x_{S_k}'\right)}
    =\prod_{k:u\in S_k}\frac{\phi_k\left(x_{S_k}\right)}{\phi_k\left(x_{S_k}'\right)},
    \label{eq:quotient_markov}
\end{equation}
where the last equality uses that $x$ and $x'$ differ only at $u$, while the factor $\phi_k$ is independent of $x_u$ unless $u\in S_k$.

On the other hand, using the Bayesian network factorization (\ref{eq:bayesnet}), we have
\begin{equation}
    \frac{p(x)}{p(x')}
    =\frac{\prod_{v\in V}p\left(x_v\mid x_{\Pa(v)}\right)}{\prod_{v\in V}p\left(x_v'\mid x_{\Pa(v)}'\right)}
    =\prod_{v\in V}\frac{p\left(x_v\mid x_{\Pa(v)}\right)}{p\left(x_v'\mid x_{\Pa(v)}'\right)}
    =\prod_{v\in \{u\}\cup\Ch(u)}\frac{p\left(x_v\mid x_{\Pa(v)}\right)}{p\left(x_v'\mid x_{\Pa(v)}'\right)}.
    \label{eq:quotient_bayes}
\end{equation}
The last equality similarly uses that the numerator and denominator can differ only if the conditional of $x_v$ given its parents depends on $x_u$, which occurs if and only if $v=u$ or $v$ is a child of $u$. The \DAI constraint (\ref{eq:delta_constraint}) expresses precisely the equality of the two expressions (\ref{eq:quotient_markov}) and (\ref{eq:quotient_bayes}).

\paragraph{Part 2: (1) $\Longrightarrow$ (2).}
Conversely, we must show that if the equality holds for all $x,x'$ differing in a single variable, then $p=q$. Because the left side equals of the equality equals $\frac{p(x)}{p(x'})$ by (\ref{eq:quotient_markov}), and its right side equals $\frac{q(x)}{q(x')}$ by (\ref{eq:quotient_bayes}), the equality, together with positivity of all densities, implies that $\frac{p(x)}{q(x)}=\frac{p(x')}{q(x')}$.

The transitive closure of the relation $\sim$ on $\gX$ defined by $x\sim x'$ if $x$ and $x'$ differ at a single variable is the trivial equivalence relation, i.e., any $x,x'\in\gX$ are lined by a chain $x=x_1\sim x_2\sim\dots\sim x_n=x'$ such that each $x_i,x_{i+1}$ differ in a single variable. It follows that for any $x,x'\in\gX$, $\frac{p(x)}{q(x)}=\frac{p(x')}{q(x')}$. Therefore, the two densities are proportional, which implies that they are equal.
\end{proofE}
All proofs can be found in \S\ref{sec:proofs}. We remark that the proof of Proposition~\ref{prop:delta_main}, which uses the transitive closure of the single-variable mutation relation, is similar to the principle of \emph{concrete score matching} \citep{meng2022concrete}, although the motivations are quite different.

\paragraph{\edit{From constraints to losses.}}

We return to the problem of fitting a Bayesian network $q_\theta(\cdot\mid\cdot)$ to a given factorized model $p$, as in \S\ref{subsec:training_amortization_pgm}. Proposition~\ref{prop:delta_main} gives sufficient local constraints for $q_\theta$ to equal $p$. We can thus turn these constraints squared log-ratio losses in the style of (\ref{eq:tb_simple}) and (\ref{eq:db_simple}):
\begin{equation}
    \gL_\Delta(x,u,x_u'):=
    \left[
        \sum_{k:u\in S_k}
            \log\frac
            {\phi_k\left(x_{S_k}\right)}
            {\phi_k\left(x_{S_k}'\right)}
        -
        \sum_{v\in \{u\}\cup\Ch(u)}
            \log\frac
            {q_\theta\left(x_v\mid x_{\Pa(v)}\right)}
            {q_\theta\left(x_v'\mid x_{\Pa(v)}'\right)}
    \right]^2,
    \label{eq:delta_objective}
\end{equation}
where $x'$ is defined as the perturbation of $x$ with $x_u$ changed to the value $x_u'$.  Proposition~\ref{prop:delta_main} easily implies that if $\gL_\Delta(x,u,x_u')=0$ for all $x,u,x_u'$, then $q_\theta$ is an amortized sampler for $p$.

\begin{wrapfigure}[10]{R}{0.42\textwidth}
\vspace*{-1em}
\begin{minipage}{0.42\textwidth}
\begin{algorithm}[H]
\caption{\small $\Delta$-amortized inference (basic form)}
\begin{algorithmic}[1]
\small
\Require factors $\phi_k$, DAG $D$, model $q_\theta$, optimization/exploration hyperparameters
\Repeat
\State {Sample $x\sim q_\theta^\sim(x)$ (training policy)}
\State {Sample $u\in V$, new value $x_u'\neq x_u$}
\State {$\theta\gets$ grad update w.r.t.\ $\gL_\Delta(x,u,x_u')$}
\Until converged
\end{algorithmic}
\label{alg:delta}
\end{algorithm}
\end{minipage}
\end{wrapfigure}

\paragraph{Training policy and exploration.}

Just as in GFlowNet training (\S\ref{subsec:gflownet}), the \DAI objective $\gL_\Delta(x,u,x_u')$ can be optimized at on-policy samples $x\sim q_\theta(x)$ or those drawn from a tempered/dithered policy $q_\theta^\sim$ or dataset. The variable $u$ at which $x$ is perturbed is sampled uniformly in our experiments, although future work should investigate the effect of other perturbation policies. The full training algorithm is summarized in Algorithm~\ref{alg:delta}.

\paragraph{Local credit assignment.} 
\looseness=-1
We emphasize that (\ref{eq:delta_objective}) depends only on the values of $x$ in the $\overline{G}$-neighbourhood of $u$. The conditionals in this small subset of variables is given strong local supervision from only the factors that involve the variable $u$. This also significantly reduces the memory cost, as we do not have to maintain all variables in the computation graph to compute gradients.

\paragraph{Masked autoencoder for amortizing all conditionals.}

The form of the parametric model $q_\theta\left(x_v\mid x_{\Pa(v)}\right)$ is not specified in (\ref{eq:delta_objective}). In our experiments, we use a single neural network, a masked autoencoder (MAE), to model all the conditionals specified by $D$ simultaneously, allowing to make use of generalizable structure in the conditionals. To be precise, in the case of binary ($\pm1$) data, $x_{\Pa(v)}$ is encoded as a $|V|$-length vector with all units except those corresponding to variables in $\Pa(v)$ set to 0, and the logit of $x_v$ is read off from the output unit corresponding to variable $v$.

\paragraph{A stochastic loss.} 

If the number of terms on either side of (\ref{eq:delta_objective}) is prohibitively large, \eg, due to $u$ having too many children added during chordal completion, an unbiased stochastic estimator of the gradient can be used. This estimator requires sampling only one child of $u$ at a time; see \S\ref{sec:stochastic_loss}.

\begin{wrapfigure}{t}{0.45\textwidth}
	\centering
	\vspace{-0.2in}
	\includegraphics[width=\linewidth]{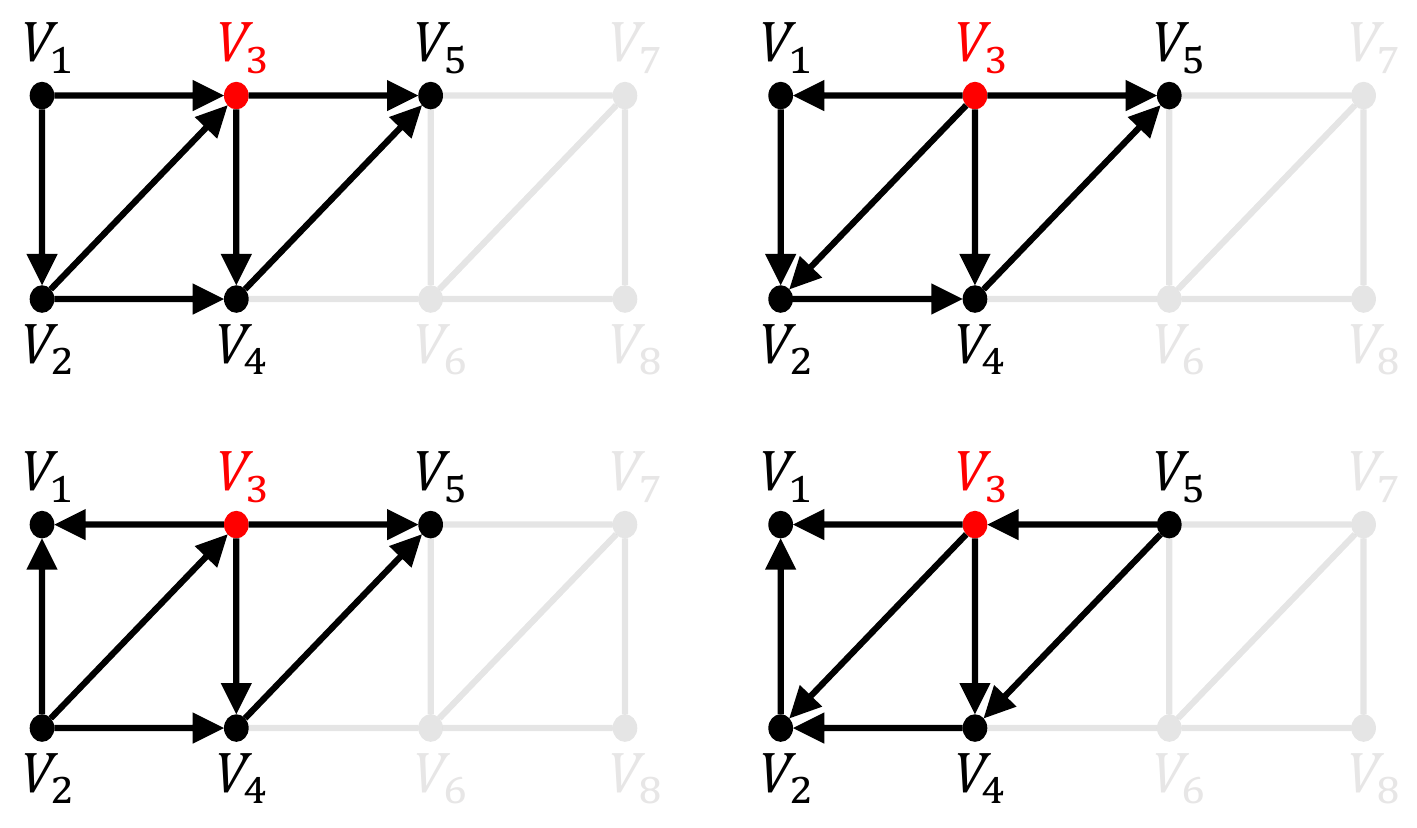}
	\vspace{-0.2in}
	\caption{\small \textbf{Generating and amortizing multiple DAG orders.} \edit{The conditionals present two I-maps (DAGs) for the same undirected model $p$ are different: for example, the conditional $p(v_1\mid v_2,v_3)$ appears in the two DAGs in the second row, but not in those in the first row. \DAI learns a model $q$ that matches the conditionals in the target distribution $p$. If $q$ has a structure that allows taking varying subsets of variables as input, then it can be trained to match the conditionals appearing in multiple DAG structures simultaneously, and the resulting sampler can then be used for sampling in any of these DAGs.}}\label{fig:graph}
 \vspace{-0.2in}
\end{wrapfigure}
\paragraph{Amortizing over multiple DAG orders.}
\label{subsec:multipleorders}
An advantage of the \DAI formulation is that it allows to perform amortized inference in multiple I-maps (Bayesian network DAGs $D$) simultaneously. Indeed, the \DAI constraint (\ref{eq:delta_constraint}) is valid for any I-map $D$ for the Markov network $G$, and the same parametric model $q_\theta$ can be used for multiple DAGs (\edit{Fig.~\ref{fig:graph}}).
Amortization of DAG orders is especially beneficial when using on-policy or tempered-policy training. If the loss is optimized only at variables near the root of the DAG, then \emph{the downstream variables do not need to be sampled} to obtain a loss gradient \edit{(a direct consequence of the locality of the credit assignment)}, which reduces the overhead of computing `rollouts' of the model to obtain samples for training. If the DAG order is freely chosen at each training iteration, then a given conditional will be trained as long as it is near the root of \emph{some} I-map for the Markov network.
\edit{While this procedure increases the number of functions that $q_\theta$ must learn to express, in practice, it does not substantially slow down convergence (see Fig.~\ref{fig:fixed_vs_rand_delta}), showing that the shared structure in the conditionals of $p$ may in fact aid learning.}

\paragraph{Uses of \DAI.} \edit{\DAI allows to solve both problems described in \S\ref{subsec:training_amortization_pgm}. If the parameters of the factors in a graphical model $p$, an amortized sampler $q_\theta$ can be trained using \DAI to match the conditional distributions of $p$, thus amortizing the (generally intractable) sampling from $p$. If the parameters $\psi$ of the factors are unknown, then the parameters of $p_\psi$ can be updated using a maximum-likelihood gradient using samples from $q_\theta$, while $q$ can be trained to match the conditional distributions of a $p_\psi$ that evolves over the course of training, as we describe next.}

\paragraph{\DAI in maximum-likelihood training of PGMs.}

As discussed at the end of \S\ref{subsec:graphical_models}, amortized inference models can be used to obtain the log-likelihood gradient in generative modeling settings. We elaborate two such settings in which an amortized model trained using \DAI can participate in a bilevel optimization loop with a generative model.
\begin{itemize}[left=0pt,nosep]
\item In the training of an energy-based model, such as a factorized Markov network $p_\psi$, an amortized inference model $q_\theta$ can be used to draw the samples from $p_\psi$ needed in the negative phase in (\ref{eq:ebm_gradient}).
\item In the training of a generative model $p_\psi$ on variables $V$ with latent variables $H\subset V$, an amortized inference model $q_\theta$ can be used to obtain the samples of $x_H$ conditioned on $x_{V\setminus H}$ needed in (\ref{eq:lvm_gradient}). If $p_\psi$ has a known Markov network structure $G$, then the conditional distribution of $x_H$ given $x_{V\setminus H}$ is a Markov network with respect to the induced subgraph of $G$ on $H$ and is therefore also a Bayesian network with respect to an I-map for that subgraph. Thus one can train an amortized inference model $q_\theta$ as a sampler of latent variables $x_H$ using the \DAI objectives, where the model $q_\theta$ amortizes the dependence on the observed variables $x_{V\setminus H}$ by explicitly conditioning on them. 
\end{itemize}
In both cases, one alternately optimizes $\theta$ with respect to the \DAI objectives and updates $\psi$ using (\ref{eq:bn_gradient}) or (\ref{eq:ebm_gradient}). \edit{The locality of credit assignment also benefits this parameter-learning scenario, in which both the parameters of both $q_\theta$ and $p_\psi$ can be optimized after sampling only a single variable and its Markov blanket.} The scheduling of the optimization steps in these bilevel optimization loops is an important design choice. Past work that used a GFlowNet as the amortized inference model performed alternating gradient steps with respect to $\theta$ and $\phi$ \citep{zhang2022generative} or used adaptive schedules based on the loss values \citep{hu2023gfnem}. We describe our choices in the experiment sections below.

\paragraph{Continuous \DAI and score matching.} A variant of \DAI with real-valued variables has connections to score matching; see \S\ref{sec:continuous} for discussion and experiments. 

\section{Experiments: Synthetic data}
\label{sec:experiments_synth}

\edit{In this section, we} demonstrate the efficacy and efficiency of \DAI on sparse synthetic models \edit{in the case where the parameters of $p_\psi$ are known, and study the parameter-learning scenario in \S\ref{sec:experiments_real}.} We test the baselines and our method on the three graphical models shown in Fig.~\ref{fig:gmodel}. See \S\ref{sec:extended_synth} for the detailed experimental setup.

\begin{wrapfigure}[12]{t}{0.54\textwidth}
     \vspace{-0.2in}
     \centering
     \hfill
     \begin{subfigure}[b]{0.215\textwidth}
         \centering
         \includegraphics[width=\textwidth]{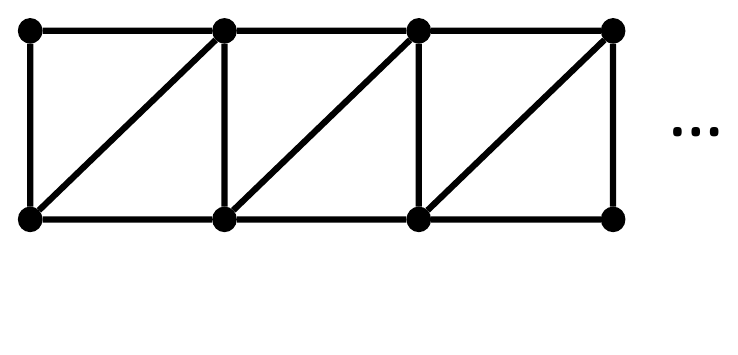}
         \caption{Ising ladder}
         \label{fig:gmodel1}
     \end{subfigure}
     \hfill
     \begin{subfigure}[b]{0.15\textwidth}
         \centering
         \includegraphics[width=\textwidth]{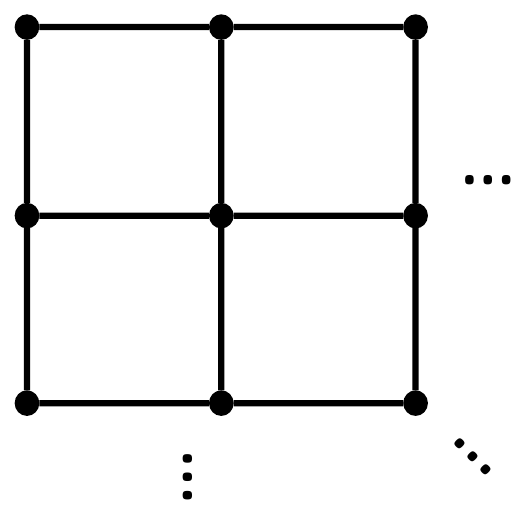}
         \caption{Ising lattice}
         \label{fig:gmodel2}
     \end{subfigure}
     \hfill
     \begin{subfigure}[b]{0.15\textwidth}
         \centering
         \includegraphics[width=\textwidth]{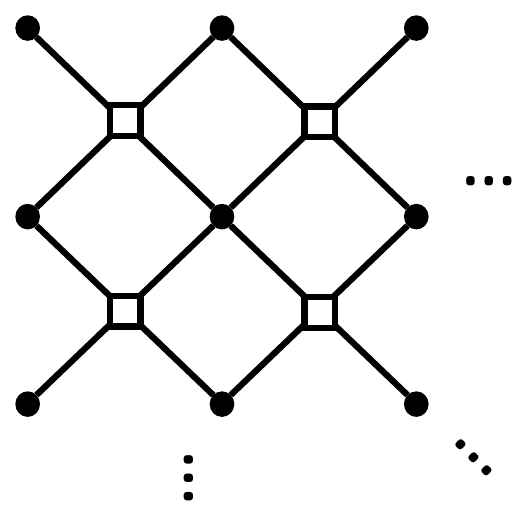}
         \caption{Factor lattice}
         \label{fig:gmodel3}
    \end{subfigure}
    \hfill
    \vspace{-0.12in}
    \caption{\small \textbf{Graphical models.} (a) and (b) are UGMs for Ising models and (c) shows the factor graph model, where each factor is a small randomly initialized MLP with four arguments. (a) is chordal and (b,c) are non-chordal.}
    \label{fig:gmodel}
\end{wrapfigure}

\paragraph{\DAI vs.\ GFlowNets.} We compare \DAI to regular GFlowNet losses: trajectory balance (TB) and detailed balance (DB). Both require all variables to be instantiated to compute a single reward, thus become inefficient when there is an excessive number of variables. 
We also compare against the forward-looking parameterization (FL-DB; see \S\ref{subsec:gflownet} and \S\ref{sec:extended_gfn}), which reparametrizes the flow function with intermediate energies to accelerate learning. All algorithms amortize over multiple DAG orders.
The evaluation metric is negative log likelihood (NLL) of ground-truth samples from the target energy function (long-run Gibbs) with respect to each learned sampler.

Fig.~\ref{fig:delta_vs_gfn_synthetic} shows that our \DAI achieves significantly faster training convergence, because even with the same parametric model $q_\theta$ being learned, the baselines still need all the variables to be instantiated to receive a reward, resulting in longer wall-clock time and poor credit assignment. On the other hand, the \DAI objective requires only a small subset of variables to be instantiated, which results in much faster wall-clock time and stronger training signals for those small subsets. These results demonstrate the superiority of the proposed local objectives.

\paragraph{\DAI vs.\ unstructured amortized inference.} In \S\ref{sec:extended_synth}, we also compare \DAI with amortized samplers from past work that do not use the constraint of graphical model structure, finding that they tend to converge even slower than the structured variants (Fig.~\ref{fig:delts_vs_gfn_anyorder}).

\begin{figure}[t]
\vspace*{-1em}
	\centering
	\includegraphics[width=\linewidth]{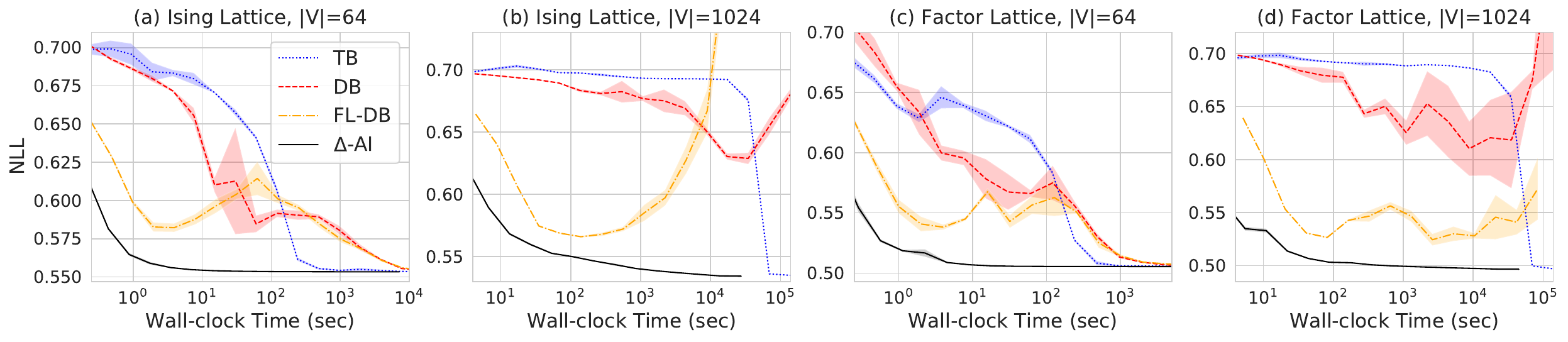}
        \vspace{-0.25in}
	\caption{\small \textbf{Comparison of \DAI and GFlowNets}. \DAI converges to the target distribution fastest.}
 \label{fig:delta_vs_gfn_synthetic}
    \vspace{-0.1in}
\end{figure}

\begin{figure}[t]
\vspace*{-1em}
	\centering
	\includegraphics[width=\linewidth]{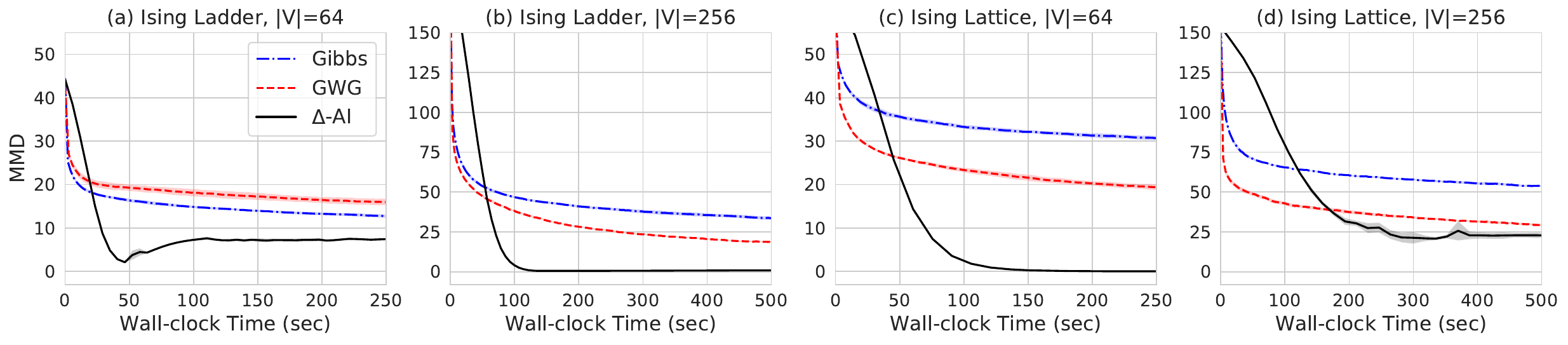}
        \vspace{-0.25in}
	\caption{\small \textbf{Comparison against MCMC}. \textbf{(a, b)} chordal graph,
 \textbf{(c, d)} non-chordal graph. \DAI provides a substantial amortization benefit, with training time smaller than the mixing time of MCMC chains.
 }
 \label{fig:delta_vs_mcmc_synthetic}
 \vspace{-0.2in}
\end{figure}

\paragraph{\DAI vs. MCMC.} We further compare \DAI against MCMC methods: Gibbs sampling and its variant Gibbs-With-Gradients \citep[GWG;][]{grathwohl2021oops}. The goal of this experiment is to measure the amortization costs and benefits by comparing wall-clock time to generate high-quality samples. In Fig.~\ref{fig:delta_vs_mcmc_synthetic}, the time axis of MCMC baselines means the wall-clock time of running 10k independent Gibbs chains, and for \DAI it is the training time. Note that for this experiment, we do not amortize multiple DAG orders in \DAI. Also note that the energy functions have a peaky landscape (see \S\ref{sec:extended_synth}), so the MCMC baselines have difficulties in traversing all the modes. The evaluation metric is linear MMD between ground-truth samples (obtained  by long-run Gibbs) and the samples generated from each amortized sampler. We can see from Fig.~\ref{fig:delta_vs_mcmc_synthetic} that after a short period during which the amortized \DAI sampler performs poorly (as the amortization network is close to initialization), soon \DAI achieves significantly lower MMD, showing its superior effectiveness in traversing all the modes in the peaky distribution.
\section{Experiments: Variational EM on real data}

\begin{wrapfigure}[26]{R}{0.3\textwidth}
    \centering
    \vspace{-0.45in}
    \includegraphics[width=\linewidth]{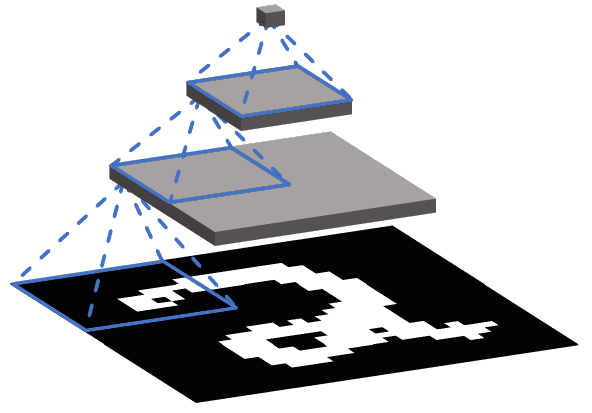}
    \vspace{-0.2in}
    
    \caption{\small \textbf{Structure of the PGM used in \S\ref{sec:experiments_real}.} Grey layers above the pixel layer are composed of latent variables. Nodes within the boundaries of a blue square represent a clique with the node in the layer above (dashed lines), except in the pixel layer, where edges between pixels are removed to enforce conditional independence given the latents. Only three clique windows are shown for better visibility; in practice,windows are tiled across the layers with a specific window size and stride. The total number of binary latent variables in the three layers is $144+16+4=164$. The graph contains only 8.9\% of the edges possible in a model with conditionally independent pixels.
    }
    \label{fig:pyramid}
    \vspace{-0.2in}
\end{wrapfigure}

To demonstrate how \DAI can be used to learn a PGM from partial observations, we revisit the problem of latent variable modeling for MNIST images \citep{deng2012mnist}. Specifically, we want to train a generative model $p_\psi(x_H,x_{V\setminus H})$, where $x_H$ are latent variables and $x_{V\setminus H}$ are the observed pixel values. To train this model using variational EM, we learn an amortized variational posterior $q_\theta(x_H \mid x_{V\setminus H})$ using \DAI or another inference method (E-step) and learn $p_\psi$ by maximizing $\log p_\psi(x_H, x_{V\setminus H})$ on samples from $q_\theta$ (M-step). Following past work on discrete probabilistic modeling, the images are discretized by interpreting the pixel values in $[0,1]$ as parameters of independent Bernoulli random variables, from which we sample the pixel values. 

We assume a pyramid-shaped graph structure for the latent variable model, shown in Fig.~\ref{fig:pyramid}, in which dependencies between variables are local and sparse, and the observed variables are conditionally independent given the latents (see \S\ref{sec:extended_mnist} for details). We define $p_\psi$ as a Bayesian network with a DAG structure that is an I-map for this pyramid-shaped undirected graphical model. The amortized posterior $q_\theta$, also a Bayesian network, has the same structure as $p_\psi$ over the subgraph consisting of latent variables, but has all edges from latent to hidden variables oriented upward, \ie, conditioning the generation of latents in $H$ on observed variables in $V\setminus H$. This choice of DAG structure for $q_\theta$ ensures it is an I-map for $p_\psi$ conditioned on $V\setminus H$. We amortize $q_\theta$ over multiple DAG orders with the same underlying undirected graph, as described in \S\ref{subsec:multipleorders}.

Given data $x_{V\setminus H}$, we train $q_\theta(x_H\mid x_{V\setminus H})$ to be proportional to $p_\psi(x_H,x_{V\setminus H})$, using the \DAI objective (\ref{eq:delta_objective}) treating $p_\psi$ as a factorized model, or using a GFlowNet objective, using $p_\psi(x_H,x_{V\setminus H})$ as the reward for $x_H$. In both cases, we use a gently exploratory training policy: $x_H$ is sampled from a tempered $q_\theta$ with a low off-policy exploration rate (see \S\ref{sec:extended_mnist}).

We compare \DAI with the following algorithms for approximating the posterior:
\begin{enumerate}[left=0pt,nosep,label=(\arabic*)]
\item \textbf{GFlowNets:} the objectives from \S\ref{subsec:gflownet} (\textbf{TB}, \textbf{DB}, \textbf{FL-DB}), trained with reward $p_\psi(x_H,x_{V\setminus H})$.  
\item \textbf{Mean-field variational EM:} a fully-factorized variational posterior trained using on-policy TB with the same reward, equivalent to minimizing KL between the amortized and true posteriors.
\item \textbf{Wake-sleep:} maximizing $\log q_\theta(x_H\mid x_{V\setminus H})$ using one of the following: (i) \textbf{Sleep:} Unconditional samples $x$ from $p_\psi$ -- the `sleep' phase of wake-sleep \citep{Hinton1995TheA}; (ii) \textbf{IW:} Imporance-weighted samples $x_H$ from the posterior given a dataset example $x_{V\setminus H}$, corresponding to the `wake' phase of reweighted wake-sleep \citep{bornschein2015rws}. The samples are taken from $q_\theta$ and weighted by $\frac{p_\psi(x_H,x_{V\setminus H})}{q_\theta(x_H\mid x_{V\setminus H})}$ normalized over a batch.

\item \textbf{Gibbs sampling:} sampling the posterior using $K$-step Gibbs sampling with respect to $p_\psi$.
\end{enumerate}
Algorithms (1), (2), and (3) use the same model architecture as \DAI for the conditionals in $q_\theta$, while (4) is not an amortized inference method and involves no learning in the E-step.

\looseness=-1
We use the NLL of test-set samples under $p_\psi$, estimated using the importance-weighted variational lower bound \citep{Burda2016ImportanceWA}, as the main evaluation metric to assess convergence of the bilevel objective for the amortized inference algorithms. To assess the quality of generated samples, motivated by out-of-distribution detection methods \citep{liu2023gen}, we also measure the mean prediction entropy of a standard pretrained MNIST classifier evaluated on unconditional samples from $p_\psi$.

Unconditional samples from $p_\psi$ and evaluation metrics are plotted in Fig.~\ref{fig:mnist_unconditional_nll}. As we move from a variational factorized posterior with learned structured prior (mean-field EM), to a structured posterior and structured prior (\DAI and GFlowNet baselines), the quality of unconditional samples improves. Wall-clock convergence for \DAI is quicker than all other baselines, with the IW variant closer to \DAI than the sleep variant, which is expected to be slower due to the initially poor quality of  samples from $p_\psi$. While they performed well in this setting, wake-sleep variants are limited to settings with a normalized target density (unlike GFlowNets, MCMC, and \DAI) and scale poorly with the latent dimension. Gibbs sampling of the posterior, the standard non-amortized sampling algorithm in graphical models, is the third-slowest to converge. Although FL-DB uses the factorization of the energy function as an inductive bias for the state flow, it showed the slowest wall-clock convergence due to the expensive computation of the energy of every partially instantiated state. 

Additional results are provided in \S\ref{sec:extended_mnist}; notably, we show that amortizing over multiple orders has barely any impact on convergence time for \DAI, as opposed to learning the conditionals of a single DAG (Fig.~\ref{fig:fixed_vs_rand_delta}), which not only demonstrates the benefit of parameter-sharing in $q_\theta$, but also \DAI's unique ability to do inference over partial subsets, which reduces sampling time. We emphasize that none of the baselines are capable of partial inference over subsets of variables. \DAI is therefore expected to lead to even more considerable improvements in wall-clock convergence time and memory requirements as the dimensionality of the graphical model is increased. 

\begin{figure}[t]
    \vspace{-1em}
	\centering
	\includegraphics[width=\linewidth]{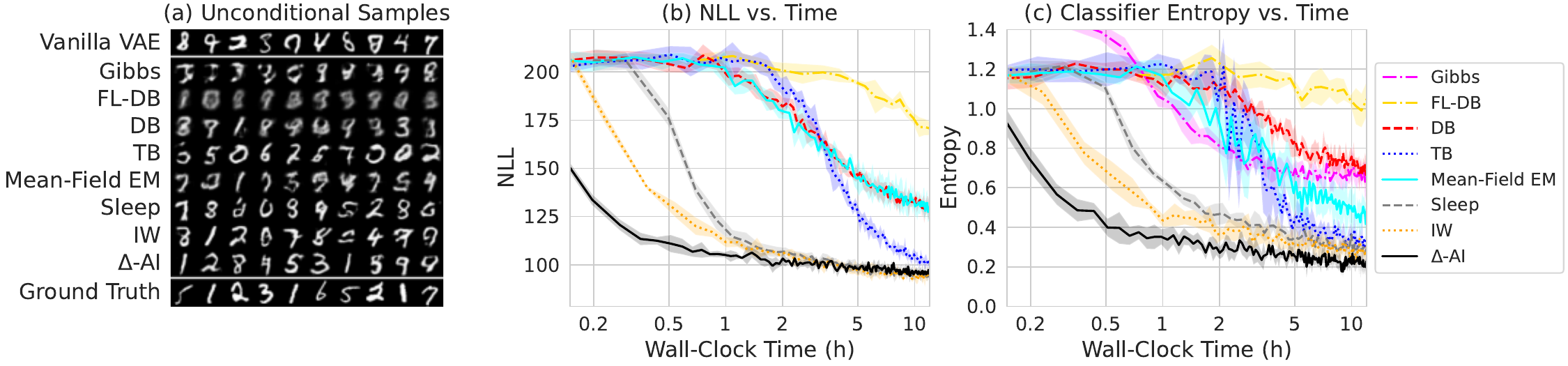}
        \vspace{-0.25in}
	\caption{\small \textbf{Results on latent-variable modeling of MNIST.} \textbf{(a)} Unconditional samples from $p_\psi$. For comparison, samples from a vanilla VAE ($\text{NLL}\approx85$, $\text{entropy}\approx0.24$), with an encoder and decoder parametrized using the same neural network architectures as for $q_\theta$ and $p_\psi$, respectively, and latent dimension equal to the number of latent variables in the graphical model, are shown at the top, and ground truth samples are shown at the bottom. \textbf{(b)} NLL of held-out test data for amortized methods. \textbf{(c)} Mean prediction entropy on unconditional samples of a pretrained MNIST classifier. All models are trained for 12 hours; $\text{mean}\pm\text{std}$ over 5 runs shown.}
 \label{fig:mnist_unconditional_nll}
 \vspace{-0.2in}
\end{figure}

\label{sec:experiments_real}
\section{Discussion}

We have proposed an objective for amortized sampling in probabilistic graphical models that uses only local information as a learning signal. \edit{In this work, we evaluated our method in settings where the PGM structure was assumed known; however, in practice, the structure may need to be learned jointly with training of the amortized sampler by \DAI. We note that the related algorithm family of GFlowNets has successfully been applied to structure learning of \emph{Bayesian} networks, both with and without joint inference of the parameters \citep{deleu2022bayesian,deleu2023jsp,nishikawa2022bayesian}, but not yet to structure learning of undirected graphical models.} 

A limitation of \DAI is the scaling to graphs for which chordalization results in very large Markov blankets, making training less efficient. Future work should thus consider methods such as stochastic losses (\S\ref{sec:stochastic_loss}) or parametrization of small \emph{joint} conditionals, such as those recovered by belief propagation in junction trees, rather than only the univariate conditionals specified by the Bayesian network. A related question is the choice of the subsets of variables to sample when learning from incomplete observations, which is necessary for learning on very large graphs where instantiating all variables is prohibitive. Heuristics for variable selection that maximize information gain -- estimation of which can also be amortized -- can be considered. Finally, \DAI should be applied to continuous spaces and on real-world data where we can impose the inductive bias of a PGM structure.

\section*{Reproducibility}

Code is available at \href{https://github.com/GFNOrg/delta-ai}{\tt https://github.com/GFNOrg/Delta-AI}.

\section*{Acknowledgments}

The authors are grateful to Zhen Liu for discussions in the initial stages of this project, to Olexa Bilaniuk for help with code optimization, and to Thomas Jiralerspong for his help during the author response period.

JF acknowlegdes support from the FRQS/MSSS.

GL acknowledges funding from CIFAR, NSERC, Samsung, and a Canada Research Chair in Neural Computation and Interfacing.

YB acknowledges funding from CIFAR, NSERC, Intel, and Samsung.

The research was enabled in part by computational resources provided by the Digital Research Alliance of Canada (\url{https://alliancecan.ca}), Mila (\url{https://mila.quebec}), and NVIDIA.

\bibliographystyle{iclr2024_conference}
\bibliography{iclr2024_conference}

\newpage
\appendix

\counterwithin{figure}{section}
\counterwithin{table}{section}

\section{Other related work}
\label{sec:rw}

Amortized inference in graphical models has been proposed as a model for human reasoning \citep{gershman-goodman}. PGMs are the setting for many early variational inference methods \citep{Saul1996MeanFT,beal2003variational,Jordan2004AnIT} and for related problems, \eg, recovering tractable joints from unary conditionals such as those appearing in the \DAI objective \citep{arnold-gokhale} and fitting conditionals over latent variables using approximate samples \citep{stuhlmueller2013learning,simkus2023variational}. Amortized inference is an ingredient in the training of structured generative models using variational EM (\S\ref{subsec:graphical_models}) and wake-sleep algorithms \citep{Hinton1995TheA,bornschein2015rws,anh2019revisit,Hewitt2020LearningTL,Le2022HybridMW}. Beyond PGMs, amortized inference is used in weakly structured spaces, notably in variational autoencoders \citep{Kingma2014AutoEncodingVB,Burda2016ImportanceWA,Rainforth2018TighterVB}; score-based models and diffusion models \citep{song2019generative,ho2020ddpm} can also be seen as performing amortized posterior inference.

\section{Extended background}

\subsection{Graphical models}
\label{sec:extended_pgm}

\paragraph{Equivalence of factor models and Markov networks.}

A distribution of the form (\ref{eq:basic}) is a Markov network with respect to a certain graph $G=(V,E)$. This graph is defined by the condition that $uv\in E$ if $u$ and $v$ cooccur in some factor's set of arguments $S_k$ (\ie, $\exists k:\{u,v\}\subseteq S_k$). Remarkably, the converse is also true: the \emph{Hammersley-Clifford theorem} \citep{hammersley-clifford, besag} states that if $X$ is a Markov network with respect to $G$, then its density has a factorization of the form (\ref{eq:basic}), with each factor $\phi_k$ depending on a set of variables that forms a maximal clique (complete subgraph) in $G$.

\paragraph{Conditional independences in Bayesian networks.}

The kinds of conditional independences that can expressed by a Bayesian network structure are distinct from those that can be expressed by a Markov network structure. For example, consider the DAG $D=\fbox{$1 \rightarrow 2 \leftarrow 3$}$. Any Bayesian network with this structure has $X_1$ and $X_3$ marginally independent, but not necessarily conditionally independent given $X_2$. No Markov network structure can encode these constraints: $X_1$ and $X_3$ must be in different connected components (since they are marginally independent), implying their independence given $X_2$ as well.

We proceed to review the conversion from Bayesian to Markov networks and vice versa, elucidating the relationships that were summarized in Fig.~\ref{fig:pgm_summary}.

\paragraph{From Bayesian networks to Markov networks.}

If $X$ is a Bayesian network with respect to the graph $D=(V,A)$, then every conditional in (\ref{eq:bayesnet}) can be considered a factor in a model of the form (\ref{eq:basic}). Thus $X$ is also a Markov network with respect to the graph $G=(V,E)$ that has an edge between two variables if they cooccur in some conditional (\ie, $(u,v)\in E$ if $(u,v)\in A$, $(v,u)\in A$, or $(u,w)\in A$ and $(v,w)\in A$ for some $w\in V$). Equivalently, $G$ is constructed by forming a clique out of every variable together with its parents.

The underlying undirected graph of $D$ equals $G$ -- the above procedure introduces no extra edges -- if and only if $D$ has no \emph{immoralities} (induced subgraphs of the form $\bullet\rightarrow\bullet\leftarrow\bullet$). Under these conditions, the set of distributions that are Bayesian networks with respect to $D$ coincides with the set of distributions that are Markov networks with respect to $G$.

\paragraph{From Markov networks to Bayesian networks.}

Conversely, suppose that $X$ is a Markov network with respect to $G=(V,E)$. Recall that an \emph{acyclic orientation} of $G$ is a DAG $D=(V,A)$ whose underlying undirected graph is $G$. It can be shown that $G$ has an acyclic orientation with no immoralities if and only $G$ is chordal.

Any graph $G$ can be converted into a chordal graph $\overline G$, known as a \emph{chordal completion} of $G$, by adding extra edges. Although the optimal such chordalization is in general intractable to compute, the min-fill heuristic can be used to find chordal completions. 

The set of distributions that are Markov networks with respect to $G$ is a subset of those that are Markov networks with respect to $\overline G$. The chordal completion $\overline G$ has an acyclic, immorality-free orientation $D$, which is called a \emph{P-map} for $\overline G$ and an \emph{I-map} for $G$. Any Markov network with respect to $\overline G$ is also a Bayesian network with respect to its P-map $D$. Consequently, any Markov network with respect to the original graph $G$ is a Bayesian network with respect to its I-map $D$.

\paragraph{Sampling a P-map for a chordal factor graph.}

Let $G$ be a chordal graph (note that any graph can be chordalized by inserting additional edges, for example, using the min-fill algorithm). We summarize how to find a P-map, \ie, an orientation of the edges of $G$ that has no immoralities.

First, the order-maximal cliques are found using a maximum cardinality search algorithm. Let $C$ be the resulting set of maximal cliques. For maximal cliques $i,j$, let $S_{ij} = | C_i \cap C_j | $ be the number of shared vertices, and let $J = (C, S)$ be the weighted graph with a node corresponding to each clique and an edge of weight $S_{ij}$ between every pair of cliques $i$ and $j$. A \emph{junction tree}, or clique tree, can be found by first sampling a maximum spanning tree for $J$ and then directing its edges away from any chosen root clique. 

Finally, the orientation of $J$ is used to derive an orientation of $G$, by traversing the tree $J$ in a topological order, visiting the unvisited nodes within each clique in an arbitrary order, and orienting edges of $G$ towards any newly visited nodes.

The arbitrary choices made in this procedure -- in the choice of maximum spanning tree, the choice of root clique, and the choice of ordering the unvisited nodes within each clique -- can be varied to produce multiple P-maps for the same chordal graph.

\subsection{Generative flow networks}
\label{sec:extended_gfn}

\paragraph{Subtrajectory balance.}

We describe an interpolation between the TB loss (\ref{eq:tb_simple}) and DB loss (\ref{eq:db_simple}), known as SubTB \citep{madan2023learning}, as it applies to our setting.

SubTB requires learning the same estimators as the DB loss: the flow function $F_\theta$, in addition to the conditionals. For a sequence of sampled values $x_{v_1},x_{v_2},\dots,x_{v_{|V|}}$, the objective is decomposed over sub-ranges:
\begin{align*}
    \gL_{{\rm SubTB},i:j}\left(x\right) &= \left(\log F_\theta\left(x_{\{v_1,\dots,v_{i}\}}\right)+ \sum_{k=i+1}^j\log q_\theta\left(x_{v_k}\mid x_{\Pa(v_k)}\right)-\log F_\theta\left(x_{\{v_1,\dots,v_{j}\}}\right)\right)^2,\\
    \gL_{\rm SubTB}\left(x\right) &= \frac{\sum_{0\leq i<j\leq |V|}\lambda^{j-i}\gL_{{\rm SubTB},i:j}(x)}{\sum_{0\leq i<j\leq |V|}\lambda^{j-i}}.
    \label{eq:subtb_simple}
\end{align*}
Notice that $j=i+1$ recovers DB and $i=0,j=|V|$ recovers TB. The hyperparameter $\lambda$ controls the tradeoff between the losses for large $j-i$ (closer to TB) and small $j-i$ (closer to DB), although values of $\lambda$ close to 1 are typical.

The forward-looking parametrization (\ref{eq:fl}) is also applicable to SubTB.

\paragraph{Defining partial energies.}

We experimented with two ways of defining the partial reward $\tilde R$ for partially instantiated variables. A natural way to define $\tilde R$ accumulates the product of the factors all of whose arguments are present in the partial sample:
\begin{equation}
    \tilde R\left(x_{\{v_1,\dots,v_i\}}\right)
    =
    \prod_{k:S_k\subseteq\{v_1,\dots,v_i\}}\phi_k\left(x_{S_k}\right).
\end{equation}
The partial reward signal is accumulated gradually as the arguments of more factors $\phi_k$ are `filled in', providing a learning signal for partial samples.

However, we found that this option performs poorly in practice, possibly due to the parametrization of ${\rm NN}_\theta$ in (\ref{eq:fl}) as a head on the masked autoencoder (cf.\ \S\ref{sec:approach}) that also predicts the logits of conditionals in $q_\theta$. Such a model must learn to be sensitive to whether any given factor is fully instantiated (all entries nonzero -- either $+1$ or $-1$), and we attribute the failure to this dependence being difficult to learn.

An alternative option, which works better in practice and is tested in our experiments, takes advantage of the factors -- always MLPs or bilinear functions, in our experiments -- being able to take arbitrary real-valued inputs, not just the discrete values in the sample space. Therefore, not-yet-sampled variables can simply be set to zero (the value corresponding to a masked input). We simply define
\begin{equation}
    \tilde R\left(x_{\{v_1,\dots,v_i\}}\right)
    =
    \prod_{k}\phi_k\left(x^{[v_1,\dots,v_i]}_{S_k}\right),
\end{equation}
where $x^{[v_1,\dots,v_i]}$ denotes the masked sample that is the input the masked autoencoder, with 0 indicating absence of a value for all not-yet-sampled variables. Thus the factors for which not all values have been instantiated are computed with inputs of 0 for the missing variables.

\section{\DAI with continuous variables}
\label{sec:continuous}

\subsection{Continuous \DAI and score matching}

Beyond the discrete spaces considered in this paper, it is interesting to consider the generalization of \DAI to continuous sample spaces, where the variables take real values. Recall that for two densities $p$ and $q$ on $\R^d$, the \emph{Fisher divergence} is defined as
\begin{equation}
    \E_{x\sim q(x)}\left\|\nabla_x\log q(x)-\nabla_x\log p(x)\right\|^2=\E_{x\sim q(x)}\sum_i\left(\frac{\partial\log q(x)}{\partial x_i}-\frac{\partial\log p(x)}{\partial x_i}\right)^2.
    \label{eq:fisher}
\end{equation}
The objective of score-based generative models (\eg, \citet{song2019generative}) optimizes (\ref{eq:fisher}) with respect to $p$ (or its score $\nabla\log p$) against a fixed distribution $q$ -- typically the data distribution convolved with noise, which can be tractably sampled if a dataset is available. This is distinct from our motivation of matching $q$ to an intractable distribution $p$. However, we have the following result.

\begin{propositionE}[informal][end,restate]
    In a PGM over real-valued variables, the expected \DAI objective (\ref{eq:delta_objective}) over $x$ sampled from $p$, $u$ chosen uniformly at random, and additive perturbations $x_u'=x_u+h$ approaches the score matching objective as $h\to0$.
    \label{prop:sm}
\end{propositionE}
\begin{proofE}
    We assume the log-density $\log p$ is continuously differentiable.
    Let $e_i$ be the unit vector along coordinate $i$. The \DAI objective is then
    \begin{align*}
        \E_{x\sim p(x)}\E_i\left(\log\frac{p(x+he_i)}{p(x)}-\log\frac{q_\theta(x+he_i)}{q_\theta(x)}\right)^2
        &=\E_{x\sim p(x)}\E_i\left(\frac{\partial\log p(x)}{\partial x_i}h-\frac{\partial\log q_\theta(x)}{\partial x_i}h+O(h^2)\right)^2\\
        &=h^2\left(\E_{x\sim p(x)}\E_i\left(\frac{\partial\log p(x)}{\partial x_i}-\frac{\partial\log q_\theta(x)}{\partial x_i}\right)^2+O(h)\right)\\
        &=\frac{h^2}{d}\left(\E_{x\sim p(x)}\left\|\nabla_x\log p(x)-\nabla_x\log q_\theta(x)\right\|^2+O(h)\right).
    \end{align*}
    We see that the \DAI objective, normalized by $\frac{d}{h^2}$, approaches the gradient of the Fisher information (\ref{eq:fisher}) as $h\to0$.
\end{proofE}
Prop.~\ref{prop:sm} has implications for training generative models over continuous-valued data with known graphical model structure. It can motivate the use of scores in place of the perturbation ratios in (\ref{eq:delta_objective}) when working in continuous spaces: the scores share with the perturbation ratios the property of being independent of the variables outside the Markov blanket of $u$.

\begin{figure}[H]
\centering
\includegraphics[height=5cm]{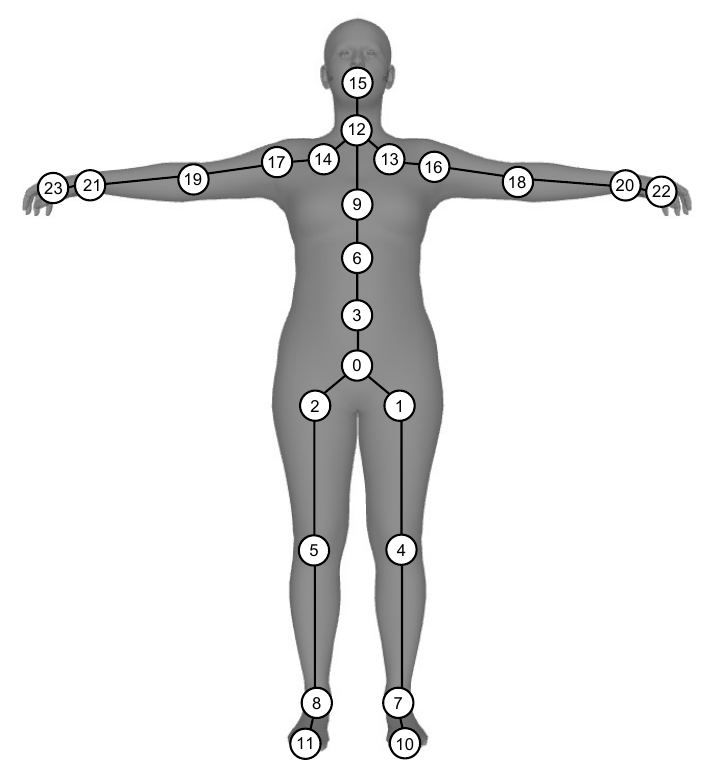}
\caption{\small \edit{\textbf{Illustration of the joint positions in the human poses dataset}. Each human pose is specified by 24 joints, as illustrated above, according to the SMPL body model \citep{smpl}. Each joint is specified by 3-dimensional joint angles of range $[-\pi, \pi]$, and connected to a set of other adjacent joints. The overall adjacency graph forms a tree, such that the graph is already chordal.}}
\vspace{-0.3in}
\label{fig:poses}
\end{figure}

\begin{figure}[H]
\centering
\includegraphics[height=4.5cm]{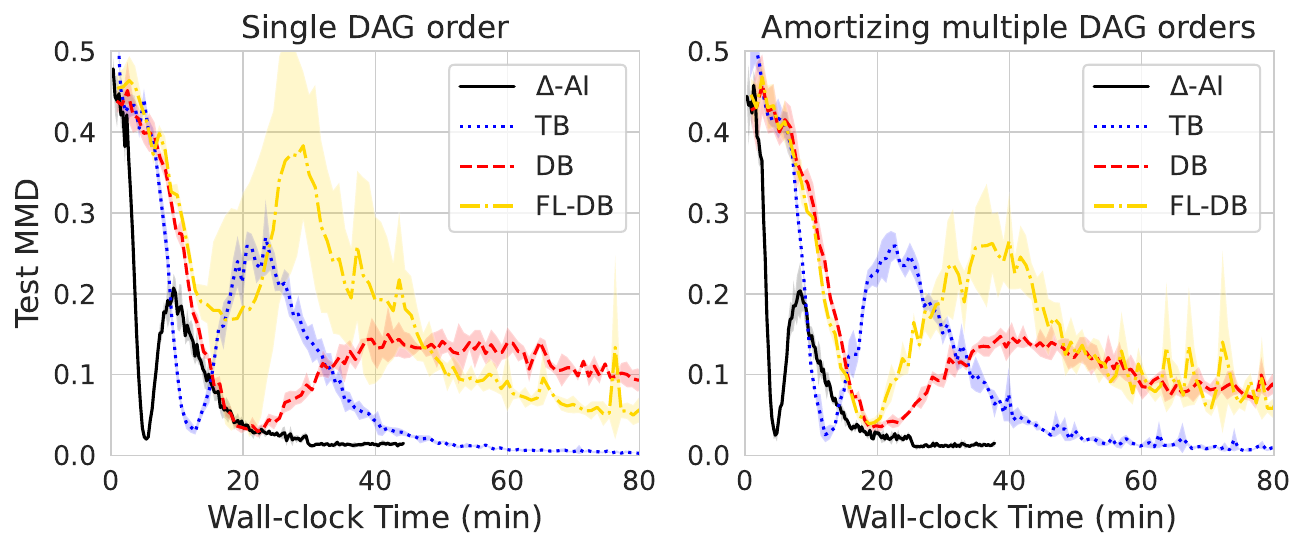}
\vspace{-0.1in}
\caption{\small \edit{\textbf{Experimental results} on the Poses dataset. We report mean and standard deviation over 3 runs.} }
\vspace{-0.2in}
\label{fig:poses_results}
\end{figure}

\edit{
\subsection{Experiment on poses dataset}
We further validate \DAI on a real-world dataset with continuous variables. This is a fully-observed setting with no latent variables.
\paragraph{Dataset.} We use the AMASS dataset \citep{AMASS:ICCV:2019}, which is a large collection of recordings of real human poses. For this experiment, we focus on the KIT subset \citep{Mandery2016b}, and sample 100,000 instances from this subset. 20\% is held-out as a test set. Each human pose is specified by 24 ``joints'', each of which represent a body part (neck, spine, left knee, right knee, etc.) that can take be articulated relative to adjacent body parts. See Figure~\ref{fig:poses} for the illustration. Each joint orientation is specified by a 3-dimensional axis-angle representation in $[-\pi, \pi]^3$. The overall graph forms a tree, which is chordal.
}
\edit{
\paragraph{Factor graph and amortized sampler.}
We assume that the data-generating factor graph has as adjacency matrix the structural connectivity of the joints in the dataset (i.e. if two joints are connected together by a body-part, such as the left wrist and left elbow, then these two variables will share an edge in the factor graph structure). Each factor takes 6 variables (2 joints $\times$ 3 angles) as input, and each factor is parameterized by a separate 3-layer multilayer perceptron with ReLU activations. 
The amortized sampler outputs the parameters of a Truncated Gaussian random variable (truncated to the range $[-1, 1]$ to model the conditional distributions in the factor graph. We found that optimizing truncated Gaussian distributions was more stable than Beta. We jointly train the sampler to approximate the factor graph distribution using \DAI and the factor parameters (the energy function) using (\ref{eq:ebm_gradient}).
}
\edit{
\paragraph{Baselines and evaluation metric.}
We consider the same set of baselines as in Figure~\ref{fig:delta_vs_gfn_synthetic}: TB, DB, and FL-DB. Note that for all the baselines, the sampling is done using the same sparse graphical model constraints (each variable is sampled conditioned only on their parents) for fair comparison. The evaluation metric is linear MMD between the test data and the generated samples from the amortized sampler.
}
\edit{
\paragraph{Results.} 
Figure~\ref{fig:poses_results} shows that \DAI outperforms all the baselines by significant margin in terms of wall-clock convergence on the MMD. This tendency holds regardless of whether we learn to sample according to a single DAG I-map of the factor graph (Figure \ref{fig:poses_results}, Left), or we amortizing multiple DAG orders using the same neural network (Figure \ref{fig:poses_results}, Right).
}

\section{Proofs}
\label{sec:proofs}

\printProofs

\section{Stochastic losses for squared-sum objectives}
\label{sec:stochastic_loss}

\begin{proposition} 
Suppose $n>1$. Let $S = \sum_{i=1}^n f_i$ and $L=\frac{1}{2}(g + S)^2$, where $g$ and each $f_i$ are functions of $\theta$. Let \[L_i := \frac{1}{2}(g + n \bar{f}_i)^2,\quad\quad L_{i,j}:=\frac{n}{2}(\bar{g} + (n-1) \bar{f}_i + f_j)^2\] with $\bar{g}$ and $\bar{f}_i$ indicating that gradients are blocked, i.e., $\frac{\partial \bar{g}}{\partial \theta}=0$ and $\frac{\partial \bar{f}_i}{\partial \theta}=0$. Then \begin{equation}\frac{\partial L}{\partial \theta}=\E_i\left[\frac{\partial L_i}{\partial \theta}\right] + \E_{i\neq j}\left[ \frac{\partial L_{i,j}}{\partial \theta}\right],\label{eq:stochastic}\end{equation} where $\E_i[]$ is a uniform average over indices $i$ in $\{1,\dots,n\}$ and $\E_{i\neq j}[]$ is a uniform average over all pairs of different indices in $\{1,\dots,n\}$.
\end{proposition}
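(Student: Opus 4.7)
The statement is a purely algebraic identity about gradients, so my plan is a direct calculation in three short steps: expand the three gradients and check that the pieces assemble correctly. The only conceptual subtlety is keeping careful track of the difference between \emph{value} and \emph{gradient} for the barred quantities: $\bar g$ and $\bar f_i$ take the same numerical values as $g$ and $f_i$, but contribute $0$ when differentiated in $\theta$. I will exploit this in both directions.

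First, I would compute
\[
\frac{\partial L}{\partial \theta}=(g+S)\!\left(\frac{\partial g}{\partial \theta}+\sum_{j=1}^n\frac{\partial f_j}{\partial \theta}\right),
\]
and aim to recover the $\frac{\partial g}{\partial\theta}$ piece from $\E_i[\partial L_i/\partial\theta]$ and the $\sum_j \partial f_j/\partial\theta$ piece from $\E_{i\neq j}[\partial L_{i,j}/\partial\theta]$.

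Second, for the single-index term I would differentiate $L_i=\tfrac12(g+n\bar f_i)^2$, using $\partial\bar f_i/\partial\theta=0$, to get $\partial L_i/\partial\theta=(g+n\bar f_i)\,\partial g/\partial\theta$. Averaging over $i$ and using that $\frac1n\sum_i n\bar f_i=\sum_i f_i=S$ in value gives exactly $(g+S)\,\partial g/\partial\theta$.

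Third, for the double-index term I would differentiate $L_{i,j}=\tfrac{n}{2}(\bar g+(n-1)\bar f_i+f_j)^2$, using $\partial\bar g/\partial\theta=\partial\bar f_i/\partial\theta=0$, to get $\partial L_{i,j}/\partial\theta=n(\bar g+(n-1)\bar f_i+f_j)\,\partial f_j/\partial\theta$. Then I would fix $j$ and sum over $i\neq j$: the $\bar f_i$ pieces contribute $(n-1)\sum_{i\neq j}\bar f_i=(n-1)(S-\bar f_j)$ in value, and since $\bar f_j$ and $f_j$ are numerically equal the $-\bar f_j+f_j$ cancels, leaving $(n-1)(g+S)$. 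Dividing by the $n(n-1)$ pairs and recombining, the double-index average collapses to $(g+S)\sum_j\partial f_j/\partial\theta$. Adding the single- and double-index contributions reproduces $\partial L/\partial\theta$ and establishes \eqref{eq:stochastic}.

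The main obstacle is really just notational hygiene: one must be comfortable treating the barred symbols as ordinary real numbers whenever they appear multiplicatively but as constants whenever $\partial/\partial\theta$ hits them, and in particular must remember that the ``value cancellation'' $-\bar f_j+f_j=0$ is legal inside the coefficient of $\partial f_j/\partial\theta$ even though $\bar f_j$ and $f_j$ behave differently under further differentiation. Once that is kept straight, no further ideas are required.
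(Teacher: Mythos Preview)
Your proposal is correct and follows essentially the same approach as the paper: direct differentiation of $L$, $L_i$, and $L_{i,j}$ followed by averaging and algebraic matching. The only cosmetic difference is that the paper fully expands $\partial L/\partial\theta$ into five atomic terms ($g\,\partial g/\partial\theta$, $S\,\partial g/\partial\theta$, $g\,\partial S/\partial\theta$, $\sum_i f_i\,\partial f_i/\partial\theta$, $\sum_{i\neq j} f_i\,\partial f_j/\partial\theta$) and matches them one by one, whereas you keep the common factor $(g+S)$ intact throughout; these are the same computation with different bookkeeping.
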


\begin{proof}
The gradient of $L$ is: 
\begin{align} \frac{\partial L}{\partial \theta} &= (g+S)\left(\frac{\partial g}{\partial \theta} + \frac{\partial S}{\partial \theta}\right) \nonumber\\ &= g \frac{\partial g}{\partial \theta} + g \frac{\partial S}{\partial \theta} + S\frac{\partial g}{\partial \theta} + \sum_i f_i \frac{\partial f_i}{\partial \theta} + \sum_{i\neq j} f_i \frac{\partial f_j}{\partial \theta}\label{eq:stoch_l}\end{align}
Let us now show that we recover the same terms from the gradient of the right side of (\ref{eq:stochastic}): 
\begin{align*} \E_i\left[ \frac{\partial L_i}{\partial \theta}\right] &=\E_i\left[(g+n f_i)\frac{\partial g}{\partial \theta}\right]\\&=g\frac{\partial g}{\partial \theta} + S\frac{\partial g}{\partial \theta}, \\ 
\E_{i\neq j}\left[\frac{\partial L_{i,j}}{\partial \theta}\right]&=\E_{i\neq j}\left[n(g + (n-1) f_i + f_j)\frac{\partial f_j}{\partial \theta}\right] \\&=g \frac{\partial S}{\partial \theta} + \sum_j f_j \frac{\partial f_j}{\partial \theta} + \sum_{i\neq j}f_i \frac{\partial f_j}{\partial \theta},\end{align*} which exactly recovers the five terms in (\ref{eq:stoch_l}).
\end{proof}

\section{Extended experiment details and results}

\subsection{Synthetic experiments}
\label{sec:extended_synth}

\begin{figure}[t]
	\centering
	\includegraphics[width=\linewidth]{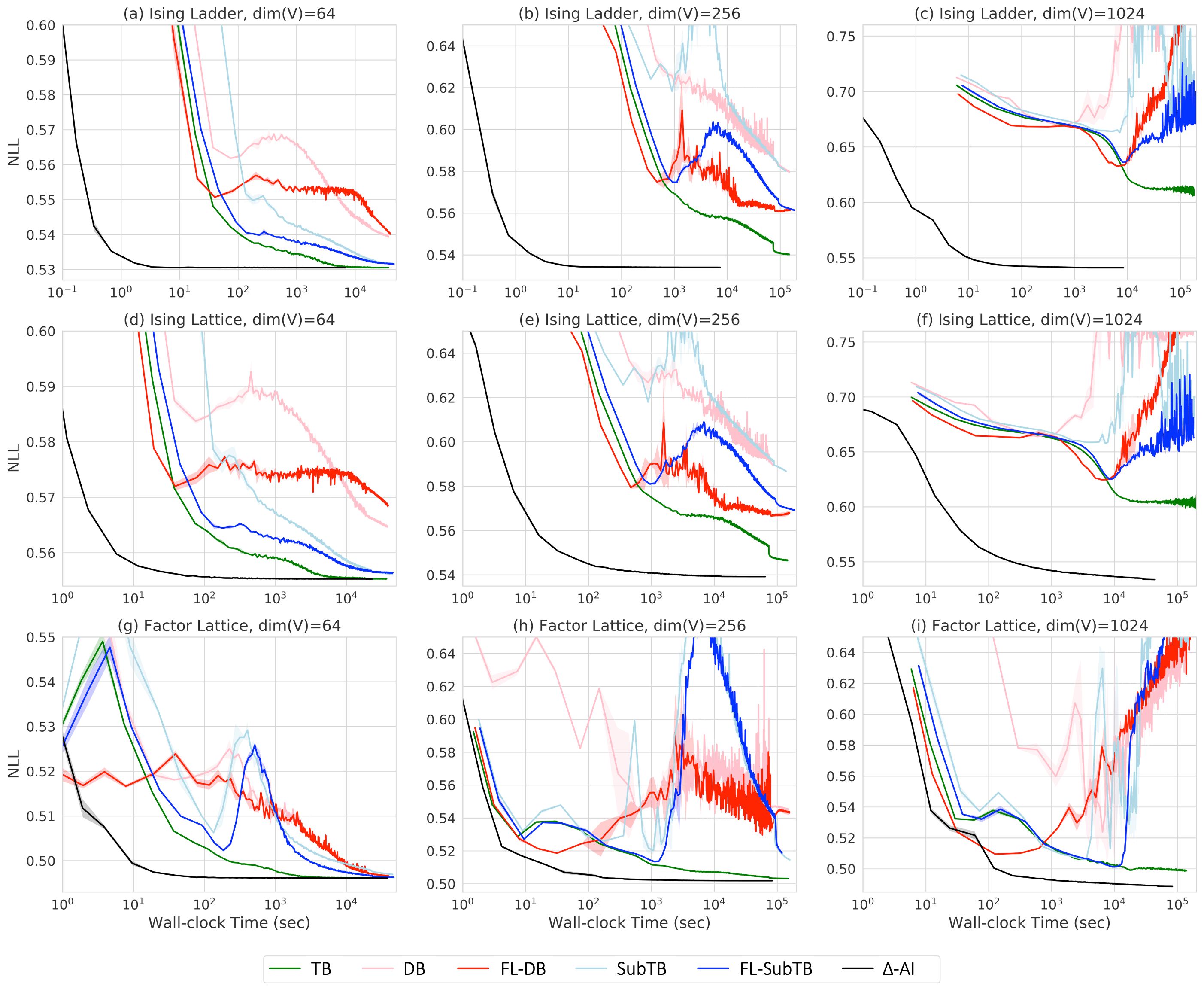}
 \vspace{-0.25in}
	\caption{\small \textbf{Wall-clock training convergence} on synthetic experiments with unstructured GFlowNet samplers from \citet{zhang2022generative}.}
 \label{fig:delts_vs_gfn_anyorder}
 \vspace{-0.1in}
\end{figure}

\paragraph{Energy models.}
In the synthetic experiments, we use either Ising models or factor graph models. 
\begin{itemize}[left=0pt]
\item
The \textbf{Ising model} is a Markov random field, a distribution over $D$-dimensional binary vectors. Each element in a vector has a value of either $-1$ or $+1$. The following energy model $\mathcal{E}$ specifies the distribution over those binary vectors $x$:
\begin{equation}
p(x) \propto \exp (-\mathcal{E}(x)),\quad \mathcal{E}(x) = \sigma \cdot ( -x^\top J x - x^\top b )
\end{equation}
where $J$ is a symmetric interaction matrix and $b$ is a unary potential vector. $\sigma > 0$ controls the sharpness of the energy function. Note that $J$ is sparse according to our assumption of sparse graphical model (see Fig.~\ref{fig:gmodel1} and Fig.~\ref{fig:gmodel2}). Each non-zero element of $J$ and $b$ has a value of either $-1$ or $+1$. In Figure~\ref{fig:delta_vs_gfn_synthetic} (a,b) we set $\sigma=0.2$, and in Figure~\ref{fig:delta_vs_mcmc_synthetic} we set $\sigma=2$ (when $|V|=64$) and $\sigma=1$ (when $|V|=256$).

\item
In the \textbf{factor graph model}, each factor is defined as a tiny MLP which has 1 hidden layer whose dimension is 10. Each MLP takes 4 arguments as an input and outputs a single scalar (see Fig.~\ref{fig:gmodel3}). The parameters of those MLPs are randomly initialized with the spherical Gaussian distribution $\mathcal{N}(0, \sigma^2 I)$. We use $\sigma=0.5$ in Figure~\ref{fig:delta_vs_gfn_synthetic} (c,d).
\end{itemize}

\paragraph{Generating ground-truth Gibbs samples.} We use NLL and MMD as an evaluation metric, which are based on the ground-truth samples from the target energy function. Therefore, it is crucial to have accurate ground-truth samples for correct evaluation. We generate those ground-truth samples by running 10k independent Gibbs chains for 10k steps. When the target energy functions are too peaky (in Fig.~\ref{fig:delta_vs_mcmc_synthetic}), we further anneal the energy function for the additional first 10k steps (i.e., initially we let the energy function be smooth and gradually lower the temperature during the first 10k steps) to prevent those chains from being attracted by only a few dominant modes.

\paragraph{Model architecture and training details.}
For all the GFN baselines and our method, the amortization network is an MAE of the following form: $\{\textsf{Linear}(512) \rightarrow \textsf{LayerNorm} \rightarrow\textsf{ReLU}\} \times 3 \rightarrow \textsf{Linear}(|V|)$. The other training details are as follows.
\begin{itemize}[left=0pt]
\item
\textbf{\DAI vs. GFNs (Fig.~\ref{fig:delta_vs_gfn_synthetic}). } We consider $|V| \in \{64, 1024\}$. Each model is trained for total 200k iterations.
\begin{itemize}[left=0pt]
\item
\textbf{Baseline GFNs: } Batchsize is set to 1k. We periodically sample a new DAG (every 50 iteration) to amortize over random DAG orders. Learning rate of the parameters of the amortized sampler is set to $10^{-3}$ and that of the partition function estimator is set to $10^{-1}$. Those learning rates are step-wisely decayed by $0.1$ at 40k, 80k, 120k, 160k, and 180k-th iteration. For the training policy, we use $\epsilon$-uniform policy such that the sampling probability $p$ is defined as $p = (1-\epsilon) \cdot p_\text{on} + \epsilon \cdot p_\text{uniform}$, where $p_\text{on}$ and $p_\text{uniform}$ are on- and uniform policy, respectively. We use $\epsilon=0.1$.
\item 
\textbf{\DAI: } We periodically sample a set of sub-DAGs (every 50 iteration) to amortize over random DAG orders. For instance, when there are 64 variables (or nodes), we sample 16 sub-DAGs for each node (see Fig.~\ref{fig:graph}), resulting in total 64 $\times$ 16 $=$ 1024 batchsize. Learning rate of the parameters of the amortized sampler is set to $10^{-3}$, with the same learning rate scheduling as the baseline GFNs. For the training policy, we simply use the tempered off-policy with temperature set to 2.
\end{itemize}
\item
\textbf{\DAI vs. MCMCs (Fig.~\ref{fig:delta_vs_mcmc_synthetic}). } 
We consider $|V| \in \{64, 256\}$. 
\begin{itemize}[left=0pt]
\item 
\textbf{Baseline MCMCs: }
We run 10k parallel Gibbs chains in this experiment. At each evaluation step, we collect all those 10k samples and measure the MMD between the ground-truth samples.

\item
\textbf{\DAI: }
In this experiment, we do not amortize DAG orders because the flexibility of inference is not the focus of this experiment. Note that when the DAG order is fixed, we cannot use the efficient on-policy training as discussed in \S\ref{sec:approach}. However, we can simply only periodically sample the full dimensional on-policy samples to avoid the cost (e.g., every 10 iteration). Due to the peaky energy landscape, we found that gradually annealing the energy function for the first 10k steps helps stabilize the training. Also, for the first 10k steps, we use high temperature (e.g., 10) for better exploration and gradually lower the temperature to 1 (on-policy). We train total 10k steps, with the batchsize set to 10k and learning rate set to $10^{-3}$. Learning rate is step-wisely decayed at 2k, 4k, 6k, 8k, and 9k-th step.
\end{itemize}

\end{itemize}

\paragraph{\DAI vs.\ any-order GFlowNets.} In past work, GFlowNets have been used as amortized samplers for unstructured energy-based models \citep{zhang2022generative}. In that work, the GFlowNet sampler selects values for the variables in arbitrary order: while the models in \S\ref{subsec:gflownet} simply train the conditionals in the Bayesian network needed to sample the variables in a topological order, there, the generative policy selects both a variable and its value.

Fig.~\ref{fig:delts_vs_gfn_anyorder} shows the training convergence on the synthetic experiments in the same way as Fig.~\ref{fig:delta_vs_gfn_synthetic}. Among the baselines, TB shows the fastest and most stable convergence, whereas DB and SubTB show relatively unstable convergence, suggesting that the flow function is harder to train than forward transition probabilities. As a result, while FL variants (FL-DB and FL-SubTB) outperform DB and SubTB, they still underperform TB. \DAI shows much faster convergence.

All experiments are run with 5 random seeds, and the $\text{mean}\pm\text{std}$ regions are shown in the plots.

\subsection{Latent variable modeling of MNIST}
\label{sec:extended_mnist}

\begin{figure}[t]
	\centering
	\includegraphics[width=\linewidth]{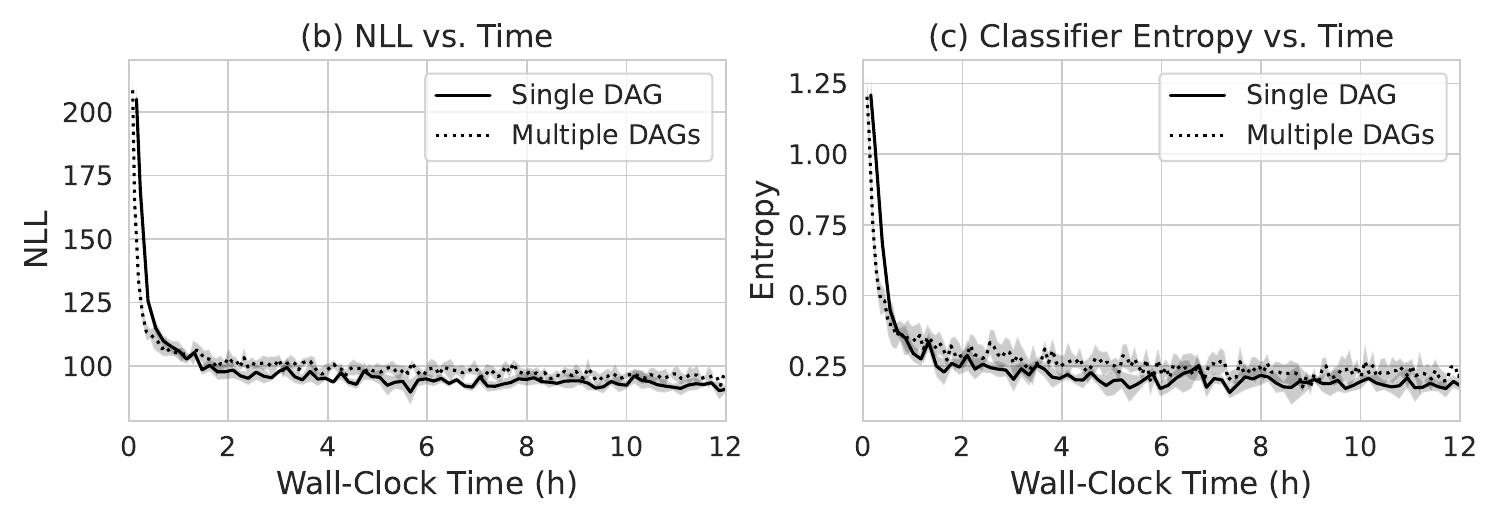}
	\caption{\small \textbf{Comparison of \DAI trained for full inference on a single full DAG vs. partial inference on multiple DAGs.} All DAGs are I-maps for the underlying undirected pyramid-shaped graphical model (see Appendix~\ref{appendix:pyramid}).}
 \label{fig:fixed_vs_rand_delta}
\end{figure}

\begin{figure}[t]
	\centering
	\includegraphics[width=0.4\linewidth]{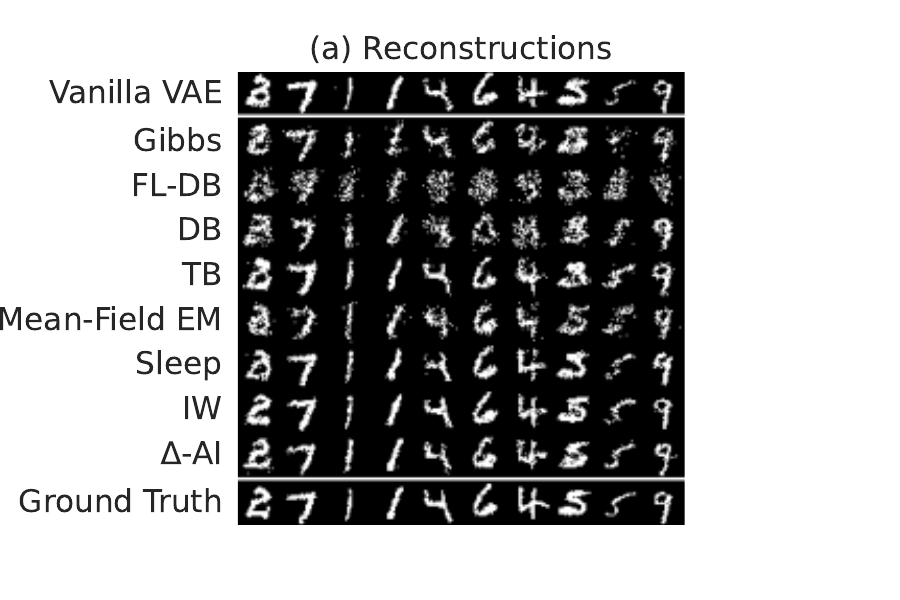}
	\caption{\small \textbf{Reconstructions from \DAI and all baselines in \S~\ref{sec:extended_mnist}}. The samples are compared to the ground-truth samples (bottom) that were used as input, and to a vanilla VAE (top).}
 \label{fig:reconstructions}
\end{figure}

\paragraph{Pyramid graph structure.}
\label{appendix:pyramid}
We expand on the specific pyramid-shaped graph structure used for the latent variable model in Fig.~\ref{fig:pyramid}, which is used for all experiments. The first hidden layer has dimensions $6\times 6$, the second $2\times 2$, and the third is a single variable node. Each latent  variable node is composed of 4 bernoulli random variables which are fully-connected. Each variable node in the first hidden layer form a clique with a window of $8\times 8$ variable nodes in the pixel-layer below. This clique window reduces to $4\times 4$, then $2\times 2$, for the second and third hidden layers respectively. All edges in $V\setminus H$ are removed such that pixels are conditionally independent given $x_H$.

The resulting graph has 4608 edges between X and H (3.6\% of all possible edges) and 7990 edges in the induced subgraph on $H$ (59.8\% of all possible edges).

\paragraph{Model architecture and training details.}

The model architecture in all experiments that involves an amortized sampler $q_\theta$ (all except the Gibbs baseline) is a masked autoencoder (MAE), as described in \S\ref{sec:approach}. It consists of a shared trunk (3 fully-connected layers with ELU activations), and one output head (a linear layer) for each variable in $V$. Layer normalization and residual connections are used in the shared trunk. All layers are of dimension 512. We used this MAE to amortize conditionals, and used a separate vector of learnable parameters of size $\vert V\vert$ to infer the marginal for each variable.

We tuned the following hyperparameters:
\begin{itemize}[left=0pt,nosep]
\item Learning rate for $q_\theta$ and $p_\psi$ in $[10^{-4}, 10^{-3}]$ for the conditionals, with the learning rate for the marginals fixed to the learning rate for the conditionals multiplied by a factor of $100$.
\item Temperature (only applicable to \DAI and the GFlowNet baselines, which use tempered samples for greater exploration) in $[1.5, 2, 3, 4]$. Tempered samples were drawn with probability $\epsilon \in [0.05, 0.1, 0.2, 0.5]$. In practice, we found \DAI to benefit from very light use of tempered sampling, with $\epsilon = 0.05$ and temperature $4$. The GFlowNet baselines did best with larger values of epsilon.
\end{itemize}

Training of the bilevel objective was done in an alternating fashion: $q_\theta$ for 100 iterations, followed by $p_\psi$ for 100 iterations. For the Gibbs variant, the number of steps was set to 1; any larger number was prohibitively slow.

Mini-batch gradient descent was used along with the Adam optimizer.  A step-wise learning rate decay (reducing the learning rate by half) was scheduled at 40\%, 70\%, and 90\% of the total number of training iterations.

Batch size was 1148 for all experiments (which is the number of latents, 164, multiplied by 7). In the case of the \DAI variant that learns multiple orders by doing partial inference over subsets, an I-map for the sub-graph consisting of a variable and its Markov blanket was sampled for each variable (totalling 164 I-maps), and 7 samples were drawn from each I-map. For all other experiments pertaining to full inference, a single I-map for the full graphical model was sampled, and 1148 full samples were drawn. In the case of the \DAI variant trained for full inference on a single DAG (Fig.~\ref{fig:fixed_vs_rand_delta}), each of the 164 variables were perturbed 7 times in 7 different samples. For the experiments learning multiple DAG orders, a new DAG I-map (or set of DAG I-maps in the case of \DAI trained on partial instantiations) was sampled every 10 iterations. For the importance-weighted variant, we used 7 independent importance-weighted samples per ground-truth sample (and reduced the batch-size by a factor of 7 to maintain the final batch-size equal to the other baselines).

We estimated NLL using the importance-weighted variational lower bound, with 100 independent importance-weighted samples per ground-truth sample. A standard pretrained MNIST classifier was used to compute the prediction entropy on unconditional samples from $p_\psi$. This classifier is a convolutional neural network with two convolution layers (10 channels, kernel size of 5, stride 1, followed by 20 channels, kernel size of 5, stride 1), each followed by a max-pooling layer and a ReLU activation, followed by two ReLU linear layers. Dropout (probability of 0.5) was applied after the convolutional block and between the two fully connected layers. Test accuracy was 0.9887.

\end{document}